\documentclass[letterpaper]{article} 
\PassOptionsToPackage{table,xcdraw}{xcolor}
\usepackage{aaai2026}  
\usepackage{times}  
\usepackage{helvet}  
\usepackage{courier}  
\usepackage[hyphens]{url}  
\usepackage{graphicx} 
\urlstyle{rm} 
\usepackage{natbib}  
\usepackage{caption} 
\frenchspacing  
\setlength{\pdfpagewidth}{8.5in} 
\setlength{\pdfpageheight}{11in} 
%
\usepackage{algorithm}

\usepackage{amsmath}    
\usepackage{amssymb}    
\usepackage{bm}         
\usepackage{booktabs}   
\usepackage{multirow}   
\usepackage{makecell}   

\usepackage{my_packages}
%
\usepackage{newfloat}
\usepackage{listings}
\DeclareCaptionStyle{ruled}{labelfont=normalfont,labelsep=colon,strut=off} 
\lstset{%
	basicstyle={\footnotesize\ttfamily},
	numbers=left,numberstyle=\footnotesize,xleftmargin=2em,
	aboveskip=0pt,belowskip=0pt,%
	showstringspaces=false,tabsize=2,breaklines=true}
\floatstyle{ruled}
\newfloat{listing}{tb}{lst}{}
\floatname{listing}{Listing}
%
\pdfinfo{
/TemplateVersion (2026.1)
}

\setcounter{secnumdepth}{1} 

%


\title{Robust Behavior Cloning Via Global Lipschitz Regularization}
\author{
    Shili Wu\textsuperscript{\rm 1}, 
    Yizhao Jin\textsuperscript{\rm 2}, 
    Puhua Niu\textsuperscript{\rm 1}, 
    Aniruddha Datta\textsuperscript{\rm 1}, 
    Sean B. Andersson\textsuperscript{\rm 3}\textsuperscript{\rm 4}
}
\affiliations{
    \textsuperscript{\rm 1}Dept. of Electrical and Computer Engineering, Texas A\&M University, College Station, TX 77843, USA\\
    \textsuperscript{\rm 2}Game AI Research Group, Queen Mary University of London, London E1 4NS, UK\\
    \textsuperscript{\rm 3}Dept. of Mechanical Engineering, Boston University, Boston, MA 02215, USA\\
    \textsuperscript{\rm 4}Division of Systems Engineering, Boston University, Boston, MA 02215, USA\\
    shiliwu@tamu.edu, acw596@qmul.ac.uk, niupuhua.123@tamu.edu, a-datta@tamu.edu, sanderss@bu.edu
}

\usepackage{bibentry}

\begin{document}

\maketitle

\begin{abstract}
Behavior Cloning (BC) is an effective imitation learning technique and has even been adopted in some safety-critical domains such as autonomous vehicles. BC trains a policy to mimic the behavior of an expert by using a dataset composed of only state-action pairs demonstrated by the expert, without any additional interaction with the environment. However, During deployment, the policy observations may contain measurement errors or adversarial disturbances. Since the observations may deviate from the true states, they can mislead the agent into making sub-optimal actions. In this work, we use a global Lipschitz regularization approach to enhance the robustness of the learned policy network. 
   We then show that the resulting global Lipschitz property provides a robustness certificate to the policy with respect to different bounded norm perturbations. Then, we propose a way to construct a Lipschitz neural network that ensures the policy robustness. We empirically validate our theory across various environments in \textit{Gymnasium}.
\end{abstract}

\section{Introduction}
\label{sec:intro}
Deep Reinforcement Learning (DRL) methods often require extensive interaction with the system for effective training, as well as carefully crafted reward structures to effectively guide the search for an optimal policy. Such training can be especially challenging in complex environments~\citep{tai2016survey}. Motivated by this, Behavior Cloning (BC)~\citep{ross2010efficient, ross2011reduction} was developed as a means of finding an optimal policy by learning from expert demonstrations. BC is widely used to train policies in fields such as autonomous driving~\citep{codevilla2019exploring} and robotics~\citep{jang2022bc}. As with reinforcement learning, the policy is trained offline. During deployment, the observations may be corrupted by noise or disturbances and it is beneficial for the learned policy to be robust against such uncertainty.

Approaches to policy robustness in the Reinforcement Learning (RL) literature can be divided into two main themes: robustness to model uncertainty and robustness to state observation. The former explores the performance of a policy when the conditions during training and testing differ, particularly with respect to the details of the transition model, using the framework of a Robust Markov Decision Process (RMDP)~\citep{nilim2005robust}. For the use of BC within such settings, see~\citep{panaganti2023distributionally}. The latter theme addresses robustness against observational uncertainty, driven by sensor noise or error. This remains relatively unexplored when BC is used to train a policy and is the core challenge we seek to address in this work.

Although prior work on the robustness of BC to observational noise is limited, there is existing literature on robustness against state observation error in the context of RL where the problem is often formulated as a State Adversarial Markov Decision Process (SA-MDP) \citep{zhang2020robust}. To find a robust policy, 
an adversary is first trained to identify the error on the state observation (from within a bounded set) that causes the most significant change in the policy's action output from the noise-free response. Robustness is achieved by seeking a policy that minimizes the discrepancy between the action resulting from the noise-free observation and the noisy one. Intuitively, this approach can be understood as trying to maintain similarity in the actions applied from nearby states so as to yield consistent performance~\citep{shen2020deep}. However, as discussed in \citep{zhang2020robust}, this approach does not provide a \textbf{certificate of robustness}, which is essential in some safety-critical applications. That is, it does not guarantee a bound on the drop in the reward under an arbitrary perturbation (from that bounded set). The idea of a certificate of robustness is further discussed in ~\cite{kumar2021policy}. In that work, Gaussian noise is injected into the training data and it is shown theoretically that after smoothing the policy, the mean cumulative reward would not drop below a certain threshold in the face $l_2$-bounded observation noise. However, these works only consider stochastic policies. 

Until recently, the relationship between the robustness of a policy and its Lipschitz continuity has been established in several works, either theoretically \citep{bukharin2023robust} or experimentally \citep{nie2023improve, pmlr-v202-song23b}. Specifically, \cite{bukharin2023robust} demonstrates that the Lipschitz continuity of the value function can be linked to the Lipschitz continuity of the policy and employs adversarial regularization to smooth the policy. However, this approach does not provide a certificate of robustness due to the nature of adversarial regularization. On the other hand, \cite{nie2023improve, pmlr-v202-song23b, zhang2022rethinking} proposed to use \textbf{Lipschitz networks}, which are specially designed networks whose structure allows control over how much its output changes relative to its input. However, the detailed relationship between the Lipschitzness of a policy and the Lipschitzness of the neural networks has not been discussed as yet.

\textbf{Technical Contributions} In this work, we extend the analysis of the SA-MDP framework to incorporate deterministic policies, because deterministic policies are a commonly used form both in RL and BC. Then, we establish the relationship between the Lipschitz continuity of a policy and the robust certificate, which enables us to quantify the performance drop of the policy when it faces the worst uncertainties. Furthermore, we demonstrate that controlling the Lipschitz continuity of a neural network inherently ensures the Lipschitz continuity of the corresponding policy, regardless of whether the policy is deterministic or stochastic. Finally, we apply these findings to behavior cloning, showcasing that our approach produces robust policies in practice.

\section{Preliminaries}
\label{sec:prem}
\paragraph{Notation} We use calligraphic letters to denote sets, 
For any vector $x$, The $p$-norm is denoted by $\Norm{\cdot}$, and the Lipschitz semi-norm is expressed as $\Norm{\cdot}_L$.

\paragraph{Markov Decision Process} We model the sequential decision making problem as a Markov Decision Process (MDP) defined as a tuple $\mathcal{M} = (\mathcal{S}, \mathcal{A}, \bP, R, \gamma, \nu_0)$, where $\mathcal{S}$ is a bounded metric state space (that is, for a given distance metric $d_\bS(\cdot,\cdot): \mathcal{S} \times \mathcal{S} \rightarrow \mathbb{R}^+$, we have that $d_\bS(s, s') \leq C$  $\forall s, s' \in \bS$ for some $C \in \mathbb{R}$), $\mathcal{A}$ is the action space, $R(\cdot, \cdot): \mathcal{S} \times \mathcal{A} \rightarrow \mathbb{R}^+$ is the reward function (assumed to be bounded on the interval $[0, R_{\max}]$), $\bP: \mathcal{S} \times \mathcal{A} \rightarrow \mathcal{P}(\mathcal{S})$ is the transition probability function, $\gamma \in (0, 1)$ is the discount factor, and $\nu_0$ is the initial state distribution over $\mathcal{S}$. The transition probability function takes the form $\mathcal{P} = Pr(s_{t+1} = s' | s_t = s, a_t = a)$, and $R_t = R(s_t, a_t)$ denotes the reward received at discrete time $t$.

A  policy $\pi: \mathcal{S} \rightarrow \mathcal{A}$ specifies the action to take as a function of the state. Note that we consider only stationary policies. Execution of a policy determines a $\gamma$-discounted visitation distribution, defined as $\rho^\pi(s) = (1 - \gamma) \prod_{t=0}^{+\infty} \gamma^t \mathcal{P}(s_t|\pi, \nu_0)$. The state value function (or $V$-function) is expressed as the expected discounted sum of rewards obtained under a policy $\pi$ that starts from a given state $s \in \bS$ starting from time $t$, shown as follow:
\begin{align*}
    V^\pi(s) &:= \E_\pi \left[\sum_{k=0}^{+\infty} \gamma^k R_{t+k+1} | s_t = s\right].
\end{align*}
\paragraph{SA-MDP} Following the work in \cite{zhang2020robust}, we restrict the perturbations of the adversary to a bounded set of states and then introduce the definition of a deterministic adversary $\mu$.

\begin{definition} (Adversary Perturbation Set) We define a set $B_\epsilon(s)$ which contains all allowed perturbations of the adversary. Formally, $\mu(s) \in B_\epsilon(s)$ where $B_\epsilon(s) = \{ s' \in \bS | d_\bS(s', s) \leq \epsilon \}$.
\end{definition}

\begin{definition} (Stationary, Deterministic and Markovian Adversary) An adversary $\mu(s)$ is a deterministic function $\mu: \mathcal{S} \rightarrow \mathcal{S}$ which only depends on the current state $s$ and does not change over time.
\end{definition}


The adversary $\mu$ seeks to optimize the following objective:
\begin{align*}
    \min_\mu \E_\pi\big[\sum_{k=0}^\infty \gamma^k R_{t+k+1}\big], \quad a_t \sim \pi(\cdot|\mu(s)) \text{ and } \mu(s) \in B_\epsilon(s). 
\end{align*}
Note that here $\pi(\cdot, \mu(s))$ is the policy using the \textit{perturbed} state, not the true one.

\cite{zhang2020robust} showed that a given policy $\pi$ can be merged with a fixed adversary $\mu$ into a single decomposed policy $\pi \circ \mu$. The value function for such a decomposed policy $\pi \circ \mu$ is given by
\begin{align*}
    V^{\pi \circ \mu} = \E_{\pi \circ \mu}[\sum_{k=0}^\infty \gamma^k R_{t+k+1} | s_t = s].
\end{align*}    
The details can be found in \cite{zhang2020robust} (see also Appendix A) 
\paragraph{Behavior Cloning} Unlike RL, which aims to discover a policy that maximizes cumulative rewards through interactions with the environment, BC seeks to find a policy, $\pi_I$, that can mimic the behavior of another policy, namely the expert policy $\pi_E$. This process begins with a pre-collected dataset generated by $\pi_E$, consisting of state-action pairs $(s, a)$. This dataset serves as the training data for BC, where the model learns to reproduce the expert's decisions without knowing the rewards. We follow a common way to perform behavior cloning: minimizing a given distance function between the expert's action and the imitator selection. We are thus interested in solving
\begin{align}
    \label{prob: vanila bc}
    \pi_I = \argmin_{\pi} \E_{s \sim \rho^E} [\mathrm{D}(\pi_{E}(s), \pi_{I}(s))],
\end{align}

where $\rho^E$ is the state distribution in the data set collected by the expert policy, $\pi_E$ is the expert policy that we are attempting to mimic, $\pi_I$ is the imitator policy, and $D$ is a distance function depending on the action space type. In this work, we use the $l_2$ distance for deterministic policies and the Total Variation (TV) for stochastic policies with a discrete action space.

\paragraph{Lipschitz Policy} 


We can describe the \textit{local} Lipschitz property of a policy using a function $L_\pi^{\epsilon}(s)$ that depends on the state $s$ and its $\epsilon$-neighborhood, which is formally defined as follows:
\begin{definition}
    The \textbf{local} Lipschitz constant of policy $\pi$ in a neighborhood $\epsilon$ of a given state $s$ is $L_\pi^{\epsilon}(s) = \sup_{s'\in B_\epsilon (s), s \neq s'} \frac{\mathrm{D}(\pi(s), \pi(s'))}{d_\bS(s, s')}$.
\end{definition}

The \textbf{global} Lipschitz constant $L_\pi$ is an upper bound of the local Lipschitz constant such that $L_\pi^{\epsilon}(s) \leq L_\pi$.

\begin{definition}
    A policy $\pi$ is said to be $L_\pi$-Lipschitz Continuous~($L_{\pi}$-LC) if its Lipschitz constant $L_\pi \leq \infty$, where
    $L_\pi = \sup_{(s, s') \in \bS^2, s \neq s'} \frac{\mathrm{D}(\pi(s), \pi(s'))}{d_\bS(s, s')}$
\end{definition}

Following a standard assumption \citep{bukharin2023robust}, we assume the environment is $(L_p, L_r)$-Lipschitz, which is formally defined as follows:
\begin{definition} (Lipschitz MDP)
    A MDP is $(L_p, L_r)$-Lipschitz if the rewards function and transition probability satisfy
    \begin{align*}
        |R(s, a) - R(s', a')| \leq L_r (\mathrm{D}_\bA (a, a') + d_\bS (s, s')),\\
        \mathrm{D}_{TV} (p(\cdot | s, a), p(\cdot | s', a')) \leq L_\bP (\mathrm{D}_\bA (a, a') + d_\bS (s, s')).
    \end{align*}
\end{definition}

\section{Behavior Cloning with a Lipschitz Policy Neural Network}


\subsection{Certified Robustness from Lipschitz Constant}
In this section, we first present a metric to assess the level of robustness for a given policy $\pi$ and then we show how the Lipschitz constant of a policy $\pi$ is related to this robustness certificate. We then use these results to inform the choice of a new objective function for training based on BC. We start with defining the robustness certificate.
\begin{definition}
    The robustness certificate $\Theta(\pi)$ of a policy $\pi$ is defined to be  
    \begin{align*}
    \Theta(\pi) = \max_s [V^{\pi}(s) - \min_{\mu} V^{\pi \circ \mu}(s)].
\end{align*}
\end{definition}
\begin{remark} $\Theta$ can be understood as the worst-case performance drop when $\pi$ when the state observation is subject to perturbations. This objective can be translated to the objective of maintaining the expected cumulative reward above a certain threshold in previous work (e.g \cite{kumar2022certifying}).
\end{remark} 

We first discuss the robustness level of an $L_\pi$-LC policy $\pi$ when it encounters a bounded adversary $\mu$ or, equivalently, the performance advantage of policy $\pi$ with respect to the merged policy $\pi \circ \mu$. Using the Performance Difference Lemma (PDL) \citep{Kakade2002ApproximatelyOA}, we can prove the following theorem. The technical details of this proof are presented in Appendix A. 
\begin{theorem}\label{thm: performance drop}
    Consider an infinite horizon ($L_P, L_r$)-LC SA-MDP $\bM = \{ \bS, \bA, R, \bP, \gamma, B_\epsilon \}$ with a bounded reward function, $0 \leq R \leq R_{\max}$, and policy $\pi$ with a local Lipschitz function $L_\pi^\epsilon$. Then, for small enough $\epsilon$ and any fixed bounded adversary $\mu: \bS \rightarrow \bS$, we have that
    \begin{align*}
        \Theta(\pi) 
        \leq \alpha \mathrm{D}(\pi(\cdot, s), \pi(\cdot, \hat{s})) \leq \alpha L_\pi^\epsilon(s) \epsilon, \quad \text{for any } \hat{s} \in B_\epsilon (s), \forall s \in \bS.
    \end{align*}
    where $\alpha = \frac{R_{\max}}{(1-\gamma)^2}$ when policy $\pi$ is stochastic and $\alpha = \frac{1}{(1-\gamma)} (L_r + \frac{\gamma R_{max}}{1 - \gamma}L_P)$ when policy $\pi$ is deterministic. 
\end{theorem}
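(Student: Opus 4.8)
The plan is to bound the per-state gap $V^{\pi}(s)-V^{\pi\circ\mu}(s)$ \emph{uniformly} over an arbitrary fixed adversary $\mu$, which suffices because $\Theta(\pi)=\max_s\max_\mu[V^\pi(s)-V^{\pi\circ\mu}(s)]$. The workhorse is the PDL applied to the pair $(\pi,\pi\circ\mu)$: for any start state $s_0$,
\begin{align*}
V^\pi(s_0)-V^{\pi\circ\mu}(s_0)=\frac{1}{1-\gamma}\,\E_{s\sim\rho^\pi}\,\E_{a\sim\pi(\cdot|s)}\!\big[A^{\pi\circ\mu}(s,a)\big],
\end{align*}
with $A^{\pi\circ\mu}=Q^{\pi\circ\mu}-V^{\pi\circ\mu}$ and $\rho^\pi$ the $\gamma$-discounted state visitation induced by $\pi$ started at $s_0$. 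The prefactor $\tfrac{1}{1-\gamma}$ already accounts for one of the $(1-\gamma)$ powers in $\alpha$; the rest of the argument is to bound the inner advantage by something proportional to the policy mismatch $\mathrm{D}(\pi(\cdot,s),\pi(\cdot,\mu(s)))$.

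The decisive step is to notice that $\pi\circ\mu$ acts at the true state $s$ using the \emph{perturbed} observation $\mu(s)$, so $V^{\pi\circ\mu}(s)=\E_{a\sim\pi(\cdot|\mu(s))}[Q^{\pi\circ\mu}(s,a)]$ and hence
\begin{align*}
\E_{a\sim\pi(\cdot|s)}\!\big[A^{\pi\circ\mu}(s,a)\big]=\E_{a\sim\pi(\cdot|s)}\!\big[Q^{\pi\circ\mu}(s,a)\big]-\E_{a\sim\pi(\cdot|\mu(s))}\!\big[Q^{\pi\circ\mu}(s,a)\big].
\end{align*}
This is a difference of the \emph{same} integrand $Q^{\pi\circ\mu}(s,\cdot)$ under the clean and perturbed action laws, which is precisely where Lipschitzness enters through $\mathrm{D}(\pi(\cdot,s),\pi(\cdot,\mu(s)))\le L_\pi^\epsilon(s)\,d_\bS(s,\mu(s))\le L_\pi^\epsilon(s)\epsilon$, using $\mu(s)\in B_\epsilon(s)$.

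From here the two regimes split. In the \textbf{stochastic} case ($\mathrm{D}=\mathrm{D}_{TV}$) I would apply the oscillation bound $|\E_{p_1}f-\E_{p_2}f|\le\mathrm{osc}(f)\,\mathrm{D}_{TV}(p_1,p_2)$ to $f=Q^{\pi\circ\mu}(s,\cdot)$; since $0\le Q^{\pi\circ\mu}\le R_{\max}/(1-\gamma)$, the inner advantage is at most $\tfrac{R_{\max}}{1-\gamma}\mathrm{D}_{TV}(\pi(\cdot,s),\pi(\cdot,\mu(s)))$, giving $\alpha=\tfrac{R_{\max}}{(1-\gamma)^2}$. In the \textbf{deterministic} case the action laws are point masses, so the advantage collapses to $Q^{\pi\circ\mu}(s,\pi(s))-Q^{\pi\circ\mu}(s,\pi(\mu(s)))$; expanding each $Q$ as reward plus discounted next value and using that both actions are evaluated at the \emph{same} true state (so $d_\bS=0$ in the Lipschitz-MDP inequalities), the reward term gives $L_r\,\mathrm{D}_\bA(\pi(s),\pi(\mu(s)))$ and the transition term gives $\gamma\,\mathrm{osc}(V^{\pi\circ\mu})\,L_P\,\mathrm{D}_\bA(\pi(s),\pi(\mu(s)))\le\tfrac{\gamma R_{\max}}{1-\gamma}L_P\,\mathrm{D}_\bA(\pi(s),\pi(\mu(s)))$, which collects into $\alpha=\tfrac{1}{1-\gamma}(L_r+\tfrac{\gamma R_{\max}}{1-\gamma}L_P)$. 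Finally I would take suprema over the states appearing in the PDL expectation and over $\mu$ and $\hat s\in B_\epsilon(s)$, and invoke $L_\pi^\epsilon(s)\le L_\pi$ to reach the stated global form.

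The step I expect to be the main obstacle is the deterministic transition term: turning the $\mathrm{D}_{TV}$ bound between $p(\cdot|s,\pi(s))$ and $p(\cdot|s,\pi(\mu(s)))$ into a clean bound on $\E_{p_1}V^{\pi\circ\mu}-\E_{p_2}V^{\pi\circ\mu}$ without an extraneous factor of two. The naive estimate $2\|V^{\pi\circ\mu}\|_\infty\mathrm{D}_{TV}$ would inflate the constant, so I must use the tighter oscillation form, leaning on both $V^{\pi\circ\mu}\ge0$ and the ceiling $V^{\pi\circ\mu}\le R_{\max}/(1-\gamma)$. A second point requiring care is that the PDL expectation ranges over all states visited under $\pi$, each with its own local constant $L_\pi^\epsilon(s)$; the ``small enough $\epsilon$'' hypothesis together with $L_\pi^\epsilon(s)\le L_\pi$ is what permits replacing this state-dependent quantity by a single uniform factor.
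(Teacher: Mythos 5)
Your proposal is correct and follows the same skeleton as the paper's proofs in Appendix A: apply the Performance Difference Lemma to $(\pi,\pi\circ\mu)$, rewrite the inner advantage as $\E_{a\sim\pi(\cdot|s)}[Q^{\pi\circ\mu}(s,a)]-\E_{a\sim\pi(\cdot|\mu(s))}[Q^{\pi\circ\mu}(s,a)]$ via the SA-MDP Bellman identity, and then bound this difference of expectations of the same integrand under the clean and perturbed action laws. The stochastic branch is essentially identical to Theorem~\ref{thm: robust cerficate for stochastic policy}; if anything your oscillation bound $|\E_{p_1}f-\E_{p_2}f|\le \mathrm{osc}(f)\,\mathrm{D}_{TV}(p_1,p_2)$ is the cleaner way to land on $R_{\max}/(1-\gamma)$ without the factor-of-two slippage that the paper's chain (which silently identifies $\sum_a|\pi(a|s)-\pi(a|\mu(s))|$ with $\mathrm{D}_{TV}$) incurs. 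The deterministic branch is where you genuinely diverge in machinery: the paper lifts deterministic policies to Dirac measures, invokes Kantorovich duality (its Lemma~\ref{thm: neural network design in W} and the Wasserstein comparison lemma) to bound the advantage by $\Norm{Q^{\pi\circ\mu}(s,\cdot)}_L\,\bW(\pi,\pi\circ\mu)$, and separately proves $Q$ is $L_Q$-Lipschitz in the action (Lemma~\ref{lemma: Lips of Q function}); you instead collapse the advantage directly to $Q^{\pi\circ\mu}(s,\pi(s))-Q^{\pi\circ\mu}(s,\pi(\mu(s)))$ and unroll one Bellman step with the $(L_P,L_r)$ assumptions at $d_\bS=0$. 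Since $\bW$ between Dirac measures is just the action distance, the two routes compute the same quantity, but yours is more elementary and again handles the transition term more tightly (the paper's Lemma~\ref{lemma: Lips of Q function} also hides a factor of two in $\sum_{s_1}|P(s_1|s,a)-P(s_1|s',a')|$ versus $\mathrm{D}_{TV}$). The one point you share with the paper rather than improve on is the final passage from the state-dependent $L^\epsilon_\pi(s)$ inside the visitation expectation to the stated uniform bound; both arguments resolve this only by taking a supremum over states.
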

\begin{remark}
    This theorem extends Theorem 5 in \cite{zhang2020robust}. Specifically, we demonstrate that the original theorem can be generalized to handle deterministic policies, as the use of the TV metric for measuring policy differences in the original formulation renders it ineffective (taking values of either 1 or 0) for deterministic cases. Furthermore, our theorem introduces a novel connection to the Lipschitz property of the policy.
\end{remark}

\begin{remark}
    The smoothing regularizer in various works (see, e.g. \cite{zhang2020robust, shen2020deep, pmlr-v202-song23b, nie2023improve}) can be regarded as finding a policy $\pi$ (parametrized by $\theta$) with a lower upper bound of this local Lipschitz function $L_\pi^\epsilon(s)$ 
\end{remark}
However, deriving the robustness certificate from $L_\pi^\epsilon(s)$ is computationally intractable since it needs to compute the magnitude of the gradient at each of the system states. Alternatively, we can use the global Lipschitz constant $L_\pi$ to derive such a certificate. To do this, we enforce a constraint on the learned policy in terms of a given bound on the global Lipschitz constant and replace \eqref{prob: vanila bc} with
\begin{align*}
    \pi_I = \arg \min_{\pi}\E_{s \sim d^E} [\mathrm{D}(\pi_{E}(s), \pi_{I}(s))] \\
    \quad \textit{subj. to} \quad \Norm{\pi_{I}}_L \leq L_\pi
\end{align*}
where $\Norm{\pi_I}_L$ is shorthand for the Lipschitz constant of the policy. It is worth noting, however, that this conservativeness means that a policy with a large Lipschitz constant does not necessarily mean that it is non-robust. Another downside of using such a hard constraint is that the imitator policy may experience significant interpolation errors. These errors arise when the policy fails to accurately mimic the expert policy's behavior, leading to a decline in performance. Consequently, the worst-case expected reward difference between the expert policy and the imitator policy under the worst-case perturbation is the sum of the interpolation error and the robust certificate. This trade-off of performance and robustness is the same as that of when using adversarial training \citep{zhang2019theoreticallyprincipledtradeoffrobustness}. Despite this, the ability to produce a certificate of robustness can be quite valuable and, as shown in Sec.~\ref{sec:experiments}, this does not seem to affect performance significantly in practice.


\subsection{Behavior Cloning with a Lipschitz Neural Network}
In this section, we will first show that regulating the $l_\infty$-norm of the neural network can be sufficient to provide robust certificates for various forms of policies. Then, we present a procedure on how to ensure the Lipschitz constant of the network reaches a desired level, $L_\pi$. As the policy may have different forms, we show how $l_\infty$-norm of the neural network is related to the $L_\pi$ respectively.

\paragraph{Policy and Neural Network Lipschitz Relationship} In the case of a deterministic policy (where the action is directly output by the neural network), obtaining the Lipschitz constant is straightforward as shown in the theorem below.
\begin{theorem} [Lipschitzness of a Deterministic Policy] \label{thm: robust certificate of deterministic policy}
    Consider an $M$-layer fully connected neural network $f: \mathbb{R}^n \rightarrow \mathbb{R}^m$ 
    where the $i^{th}$ layer has weights $W_i$, bias $b_i$, and a 1-Lipschitz activation function (such as a ReLU). Then $\forall x, y \in \mathbb{R}^n$
    \begin{align*}
        \Norm{f(x) - f(y)}_{2} \leq m\Norm{f(x) - f(y)}_\infty \leq m L^{FC}_\pi \Norm{x-y}_p
    \end{align*}
where $L^{FC}_\pi = \prod_{i=0}^M \Norm{W_i}_\infty$, the $\infty$-norm for the matrix denotes the induced matrix norm, and $m$ is the dimension of the output of the network.
\end{theorem}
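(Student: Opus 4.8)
The plan is to prove the two inequalities separately, since the statement is really a chain of two bounds: a norm-comparison bound $\Norm{v}_2 \leq m \Norm{v}_\infty$ valid for any vector $v \in \mathbb{R}^m$, and a Lipschitz bound relating $\Norm{f(x)-f(y)}_\infty$ to $L_\pi^{FC}\Norm{x-y}_p$. The first inequality is elementary and I would dispose of it immediately: for any $v \in \mathbb{R}^m$ we have $\Norm{v}_2 = \sqrt{\sum_i v_i^2} \leq \sqrt{m \max_i v_i^2} = \sqrt{m}\,\Norm{v}_\infty \leq m\Norm{v}_\infty$ (the last step using $m \geq 1$). Applying this with $v = f(x)-f(y)$ gives the leftmost inequality with essentially no effort.

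The substance is the second inequality, which I would prove by peeling off the layers of the network one at a time. Write $f = \phi_M \circ W_M \circ \cdots \circ \phi_1 \circ W_1$ (folding the biases into the affine maps), and let $z^{(i)}(x)$ denote the pre-activation at layer $i$. The plan is to show that each layer is Lipschitz in the $\infty$-norm with constant $\Norm{W_i}_\infty$, and then compose. For a single affine-plus-activation layer, since the activation $\phi$ is $1$-Lipschitz and acts coordinatewise, I would argue $\Norm{\phi(W u + b) - \phi(W u' + b)}_\infty \leq \Norm{W(u - u')}_\infty \leq \Norm{W}_\infty \Norm{u - u'}_\infty$, where the last step is exactly the definition of the induced $\infty \to \infty$ matrix norm. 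Chaining this inequality through all $M$ layers yields $\Norm{f(x)-f(y)}_\infty \leq \bigl(\prod_{i=1}^{M} \Norm{W_i}_\infty\bigr)\Norm{x-y}_\infty$, which is $L_\pi^{FC}\Norm{x-y}_\infty$.

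The one genuine gap to reconcile is that the claimed bound is stated in terms of $\Norm{x-y}_p$, whereas the telescoping argument naturally produces $\Norm{x-y}_\infty$ at the input layer. Since $\Norm{x-y}_\infty \leq \Norm{x-y}_p$ for every $p \in [1,\infty]$ (the $\infty$-norm is the smallest of the $p$-norms), the chain closes: $L_\pi^{FC}\Norm{x-y}_\infty \leq L_\pi^{FC}\Norm{x-y}_p$. I would state this monotonicity-of-norms fact explicitly as the final step. I also note that the product index in the theorem runs $i=0$ to $M$ while there are $M$ layers; I would treat the $i=0$ factor as the identity (or reindex to $i=1,\dots,M$) so that it matches the layer-by-layer composition, and flag this as a harmless indexing convention.

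The main obstacle, such as it is, will be the bookkeeping in the induction: being careful that the Lipschitz constant used at the input of layer $i$ is measured in the $\infty$-norm so that the induced matrix norm $\Norm{W_i}_\infty$ is the correct operator norm to apply at every stage, rather than accidentally mixing $p$-norms between layers. Once the $\infty \to \infty$ convention is fixed uniformly across all internal layers, the composition is mechanical and the only norm conversions happen at the very input (from $p$ to $\infty$) and the very output (from $\infty$ to $2$).
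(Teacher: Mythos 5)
Your proposal is correct and follows essentially the same route as the paper: a norm-equivalence bound $\Norm{v}_2 \leq m\Norm{v}_\infty$ at the output, a layer-by-layer composition of induced $\infty$-norm bounds giving $\prod_i \Norm{W_i}_\infty$ (which the paper cites from prior work rather than proving, but your telescoping argument is exactly that lemma), and the monotonicity $\Norm{x-y}_\infty \leq \Norm{x-y}_p$ at the input. Your observations about the sharper $\sqrt{m}$ constant and the $i=0$ indexing are accurate but do not change the substance.
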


\begin{remark}
    The proof of this theorem is provided in Appendix A. 
    The first inequality indicates that controlling the infinity-induced matrix norm $\Norm{W_i}_\infty$ is sufficient to regulate the 2-norm of the output. Then, the second inequality allows us to infer the Lipschitz constant $L_\pi$ when using this $f$ as the policy $\pi$ because the input dimension $m$ is always a fixed and known constant. Furthermore, combining \ref{thm: robust certificate of deterministic policy} with Thm. \ref{thm: performance drop}, we can get the robust certification $\Phi(\pi)$.
\end{remark}
\begin{remark}
    Note that $L^{FC}_\pi$ is a loose upper bound. Deriving an exact or tight upper bound is computationally expensive. However, the loose upper bound is sufficient for obtaining a robustness certificate.
\end{remark}
In the case that a stochastic policy operates over a discrete action space, it is common to use the $softmax$ function to output the probability of selecting each action. Then, $L_\pi$ can be derived from the following.
\begin{theorem} [Lipschitzness of Categorical Policy]\label{theorem: Lpi of a neural network}Consider a neural network $f:\mathbb{R}^n \rightarrow \Delta(\mathbb{R}^m$), where $\Delta$ is the induced probability distribution on the action space with two components: 
    an $L^{FC}_\pi$-LC fully connected network $f^{FC}: \mathbb{R}^n \rightarrow \mathbb{R}^m$, and an $softmax(\cdot)$ output layer. Then $\forall x, y \in \mathbb{R}^n$
    \begin{align*}
        \mathrm{D}_{TV}\left(f(x), f(y)\right) \leq \frac{m^2}{2} L^{FC}_\pi \Norm{x - y}_p.
    \end{align*}
\end{theorem}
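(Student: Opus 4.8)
The plan is to peel the $softmax$ layer off the network, bound its Lipschitz behavior directly, and then chain the result with the fully connected bound already established in Theorem \ref{thm: robust certificate of deterministic policy}. Write $z = f^{FC}(x)$ and $w = f^{FC}(y)$ for the pre-softmax logits, and let $s(\cdot) = softmax(\cdot)$ denote the output probability vector. Since the action space is discrete with $m$ outcomes, the total variation distance is just half the $\ell_1$ distance between the two probability vectors, so the starting point is the identity
\begin{align*}
    \mathrm{D}_{TV}(f(x), f(y)) = \tfrac{1}{2}\Norm{s(z) - s(w)}_1 = \tfrac{1}{2}\sum_{i=1}^m |s(z)_i - s(w)_i|.
\end{align*}

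First I would control a single coordinate of the softmax map. Differentiating $s_i(z) = e^{z_i}/\sum_j e^{z_j}$ gives the Jacobian entries $\partial s_i/\partial z_j = s_i(\delta_{ij} - s_j)$, and because every $s_k \in [0,1]$ each such entry is bounded in absolute value by $1$. Applying the integral form of the mean value theorem along the segment from $w$ to $z$ (needed because $s$ is vector-valued rather than scalar, so one cannot invoke the scalar MVT directly), I get
\begin{align*}
    |s(z)_i - s(w)_i| \le \sum_{j=1}^m \sup_{\xi} \Big| \tfrac{\partial s_i}{\partial z_j}(\xi)\Big|\, |z_j - w_j| \le \sum_{j=1}^m |z_j - w_j| \le m\,\Norm{z - w}_\infty .
\end{align*}
Summing over the $m$ coordinates and dividing by two then yields $\mathrm{D}_{TV}(f(x),f(y)) \le \tfrac{m^2}{2}\Norm{z-w}_\infty$.

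The final step is to substitute the logit bound. By the second inequality of Theorem \ref{thm: robust certificate of deterministic policy} (dividing the displayed chain there by $m$), the $L^{FC}_\pi$-LC fully connected network obeys $\Norm{f^{FC}(x) - f^{FC}(y)}_\infty = \Norm{z-w}_\infty \le L^{FC}_\pi \Norm{x-y}_p$; combining this with the previous display gives the claimed bound $\mathrm{D}_{TV}(f(x),f(y)) \le \tfrac{m^2}{2} L^{FC}_\pi \Norm{x-y}_p$.

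The main obstacle is the middle step: justifying the coordinatewise Lipschitz estimate for softmax. One must (i) compute the Jacobian correctly, (ii) handle the vector-valued mean value argument rigorously via the fundamental theorem of calculus along the line segment rather than a naive scalar MVT, and (iii) decide how tightly to bound the Jacobian entries. Bounding each entry crudely by $1$ is exactly what produces the stated $m^2/2$ constant; a sharper analysis using $|\partial s_i/\partial z_j| \le \tfrac14$ together with the row sum $\sum_j|\partial s_i/\partial z_j| = 2 s_i(1-s_i) \le \tfrac12$ would give a substantially smaller constant, but, consistent with the paper's stance that a loose certificate is enough (cf. the remark following Theorem \ref{thm: robust certificate of deterministic policy}), the crude bound suffices to establish the result.
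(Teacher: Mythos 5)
Your proposal is correct and follows essentially the same route as the paper's proof: express $\mathrm{D}_{TV}$ as half the $\ell_1$ distance, bound the softmax via its Jacobian (entries $s_i(\delta_{ij}-s_j)$ bounded by $1$) along the segment between the logits, pick up the two factors of $m$ from the $\ell_1$-to-$\ell_\infty$ comparison and the sum over coordinates, and chain with the $\Norm{f^{FC}(x)-f^{FC}(y)}_\infty \leq L^{FC}_\pi\Norm{x-y}_p$ bound. The paper packages the softmax step as a separate lemma giving $\mathrm{D}_{TV}(softmax(z),softmax(w)) \leq \frac{m}{2}\Norm{z-w}_1$ and then applies norm equivalence, but the computation is the same as yours, including the deliberately loose entrywise Jacobian bound.
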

\begin{remark}
    The detailed proof can be found in Appendix A. 
    Similar to Thm. \ref{thm: robust certificate of deterministic policy}, the constant $L_\pi$ for the discrete action policy can also be inferred from the value of $L_{\pi}^{FC}$ as the $m$ remains a fixed and known constant when using this $f$ as our $\pi$, and we can get the corresponding robust certificate $\Phi(\pi)$ by combining \ref{theorem: Lpi of a neural network} with Thm. \ref{thm: performance drop}.
\end{remark}

In the context of a stochastic policy over a continuous action space, it is also common to employ a Gaussian distribution where the neural network outputs the mean and standard deviation. Because the total variance between two Gaussian distributions can be upper bounded by their KL divergence, which can be easily computed by their mean and standard deviations ~\citep{zhang2020robust}, the policy Lipschitzness can also be computed from the Lipschitzness of the neural network (see Appendix A
for detail). Note that such a policy is rarely used with BC and we only include it for completeness.


\paragraph{Regulating Method} The above results show that constructing a Lipschitz network with constant $L_{\pi}^{FC}$ in turn establishes the Lipschitz bound for the policy network, $L_{\pi}$. Thus, regulating the $\infty$-induced matrix norm for the network weights gives a certificate of robustness against bounded state perturbations. 
There are multiple ways to construct the $L_\pi$-LC neural network (which we hereafter refer to as LipsNet) (such as \cite{pmlr-v202-song23b},\cite{liu2022learning}). Here we follow the approach in \cite{liu2022learning} which uses a Weight Normalization (WN) technique to construct the network since it does not need to compute the Jacobian matrix at each state $s$ and as a result is more computationally efficient than other methods. The WN technique in \cite{liu2022learning} regulates the neural network using an auxiliary loss function $\mathcal{L}$. It augments each layer of a fully connected neural network with a trainable Lipschitz bound $c_i$ on the infinity norm of the $W_i$ as shown in Fig. \ref{fig:LipsNet-overview}. The output of each layer is then calculated as
\begin{align*}
    y_i = \sigma_i(\hat{W}_i x_i + b_i), \quad \hat{W}_i = W_i\frac{\textit{softplus} (c_i)}{\Norm{W_i}_\infty} 
\end{align*}
where $\textit{softplus}(x) = \ln (1+\exp(x))$ is implemented to prevent $\Norm{\hat{W}_i}_\infty$ from going to $0$. The auxiliary loss function is defined by $\mathcal{L}(\theta) = \lambda \prod_{i=1}^M \textit{softplus} (c_i)$. 

In contrast to the result in \cite{liu2022learning} where the focus is on finding the smoothest function (i.e., minimizing the Lipschitz constant), we are interested in setting the Lipschitz constant of the policy network to $L_\pi^{FC}$ while maintaining the expressive capabilities of the policy network. Therefore, we establish a lower limit for $c_i$ by using a $\max(\cdot, 0)$ function. The modified loss function becomes
\begin{align*}
    \mathcal{L}(\theta) = \lambda \max\left( \prod_{i=1}^M \textit{softplus} (c_i) - L_\pi^{FC}, 0\right).
\end{align*}
The intuition behind the modified loss function is that if $c_i$ falls below this lower threshold, it will result in a loss of $0$ and will not further regulate the neural network.

\begin{figure}[t]
\centering
\includegraphics[width=0.95\columnwidth]{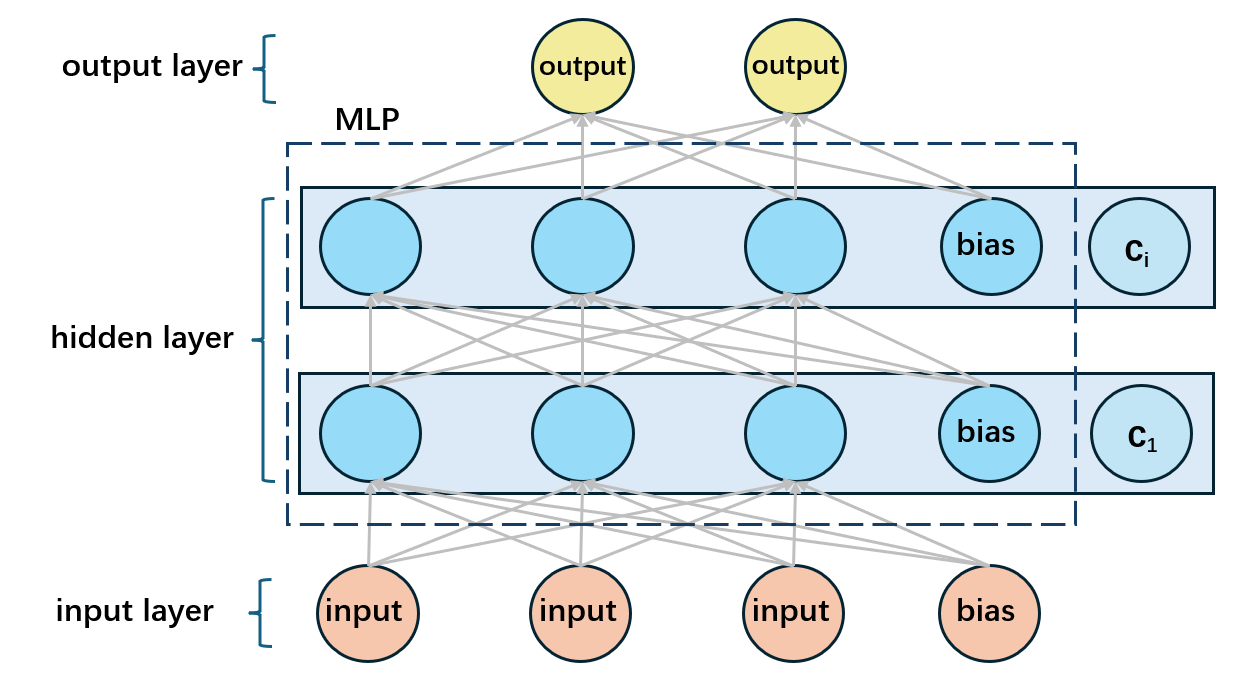}
\caption{LipsNet network configuration. Each MLP layer is appended with trainable parameters $c_i$ for weight normalization.}
\label{fig:LipsNet-overview}
\end{figure}

\begin{figure*}[t]
\centering

\begin{minipage}[b]{0.24\textwidth}
    \centering
    \includegraphics[width=\textwidth]{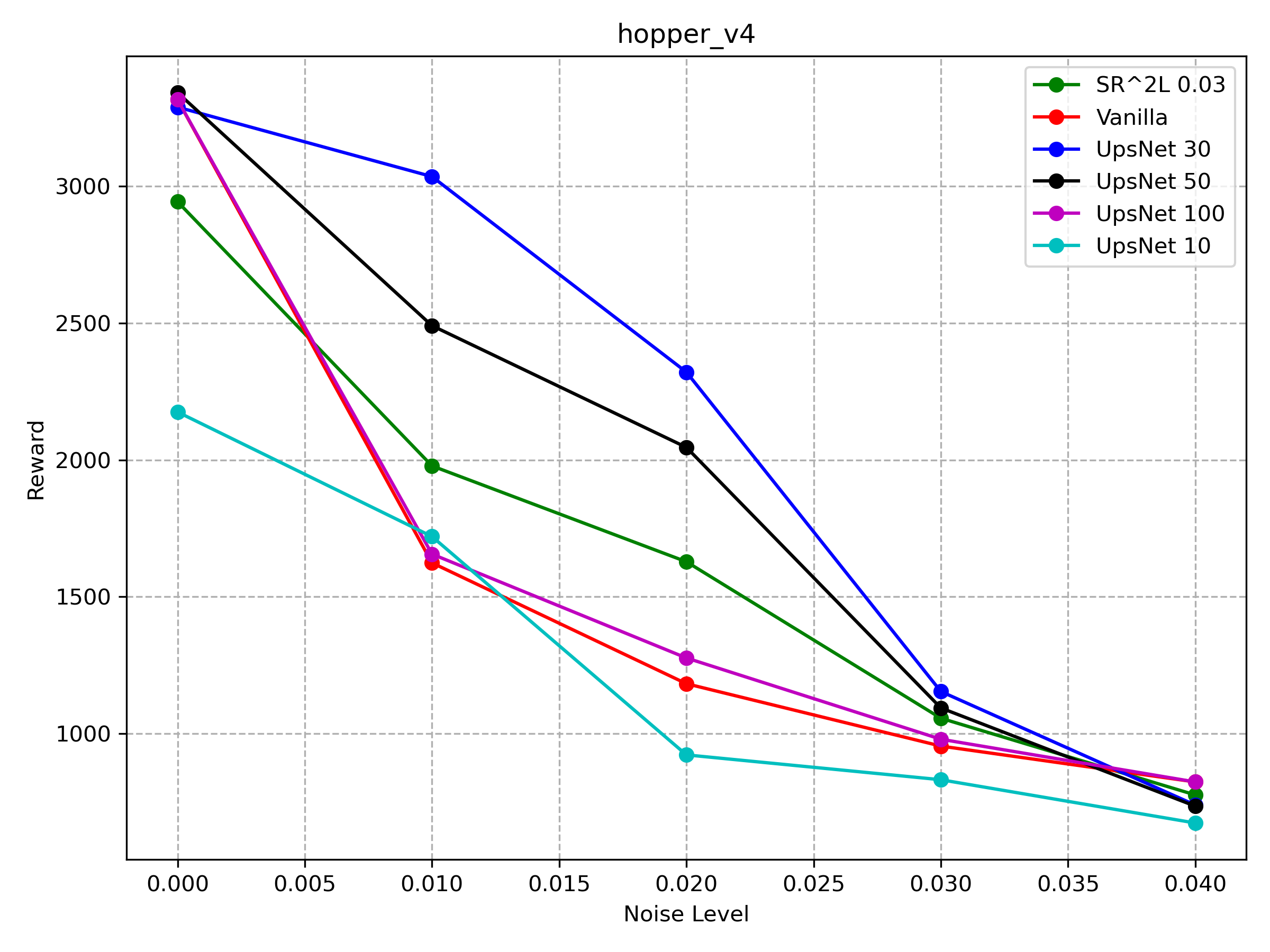}
    \caption*{(a) Hopper-v4}
\end{minipage}
\hfill
\begin{minipage}[b]{0.24\textwidth}
    \centering
    \includegraphics[width=\textwidth]{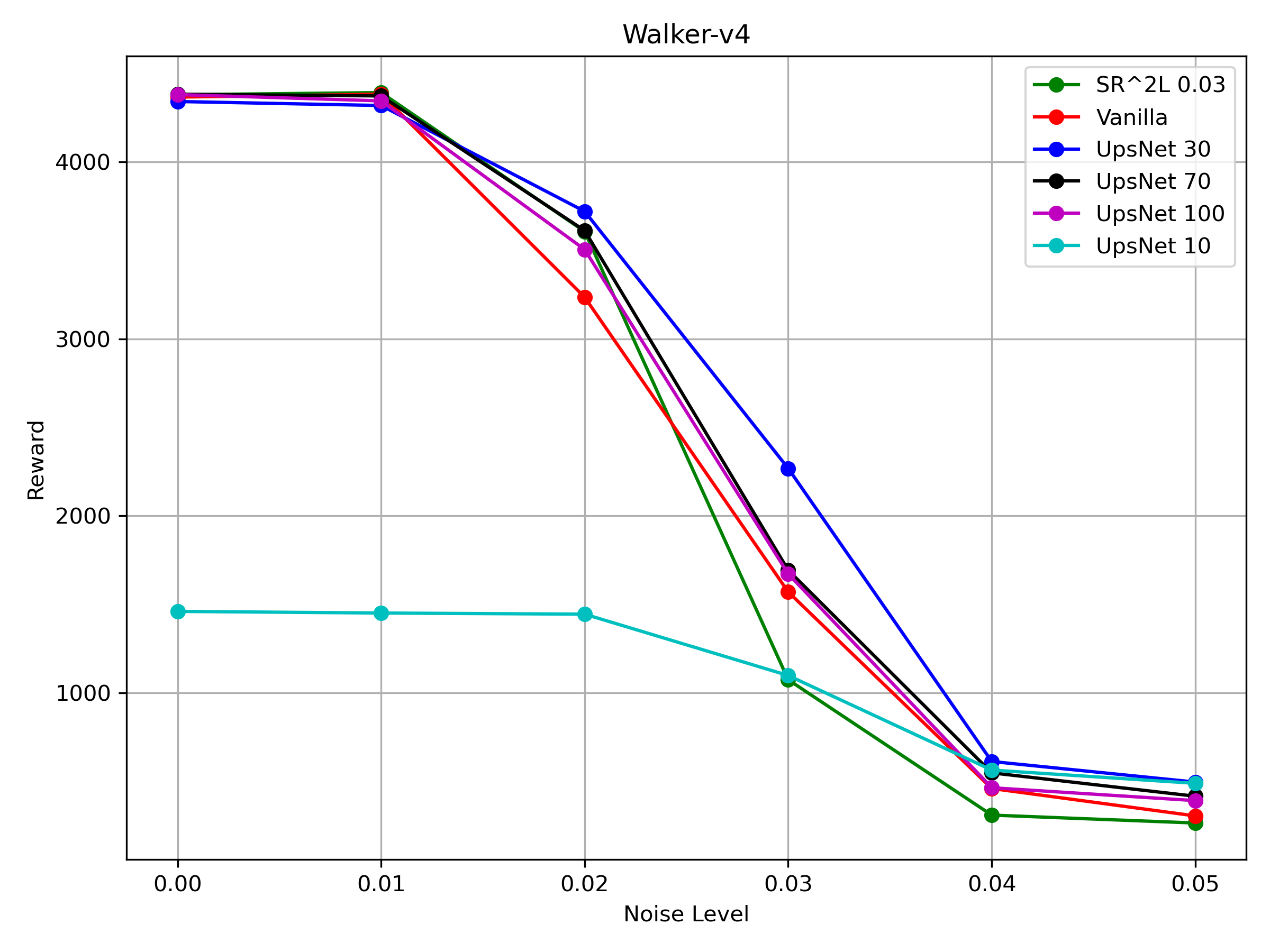}
    \caption*{(b) Walker2d-v4}
\end{minipage}
\hfill
\begin{minipage}[b]{0.24\textwidth}
    \centering
    \includegraphics[width=\textwidth]{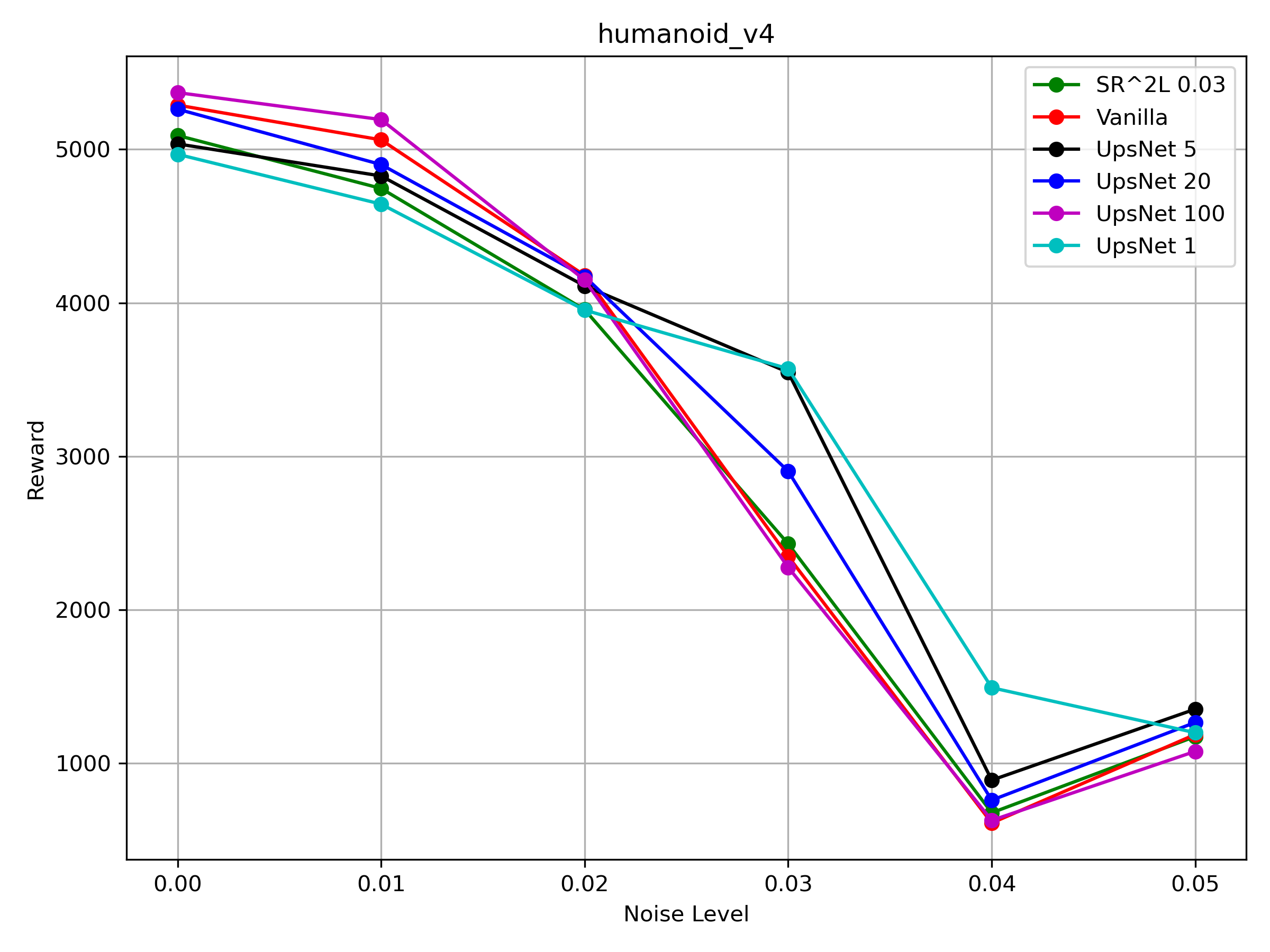}
    \caption*{(c) Humanoid-v4}
\end{minipage}
\hfill
\begin{minipage}[b]{0.24\textwidth}
    \centering
    \includegraphics[width=\textwidth]{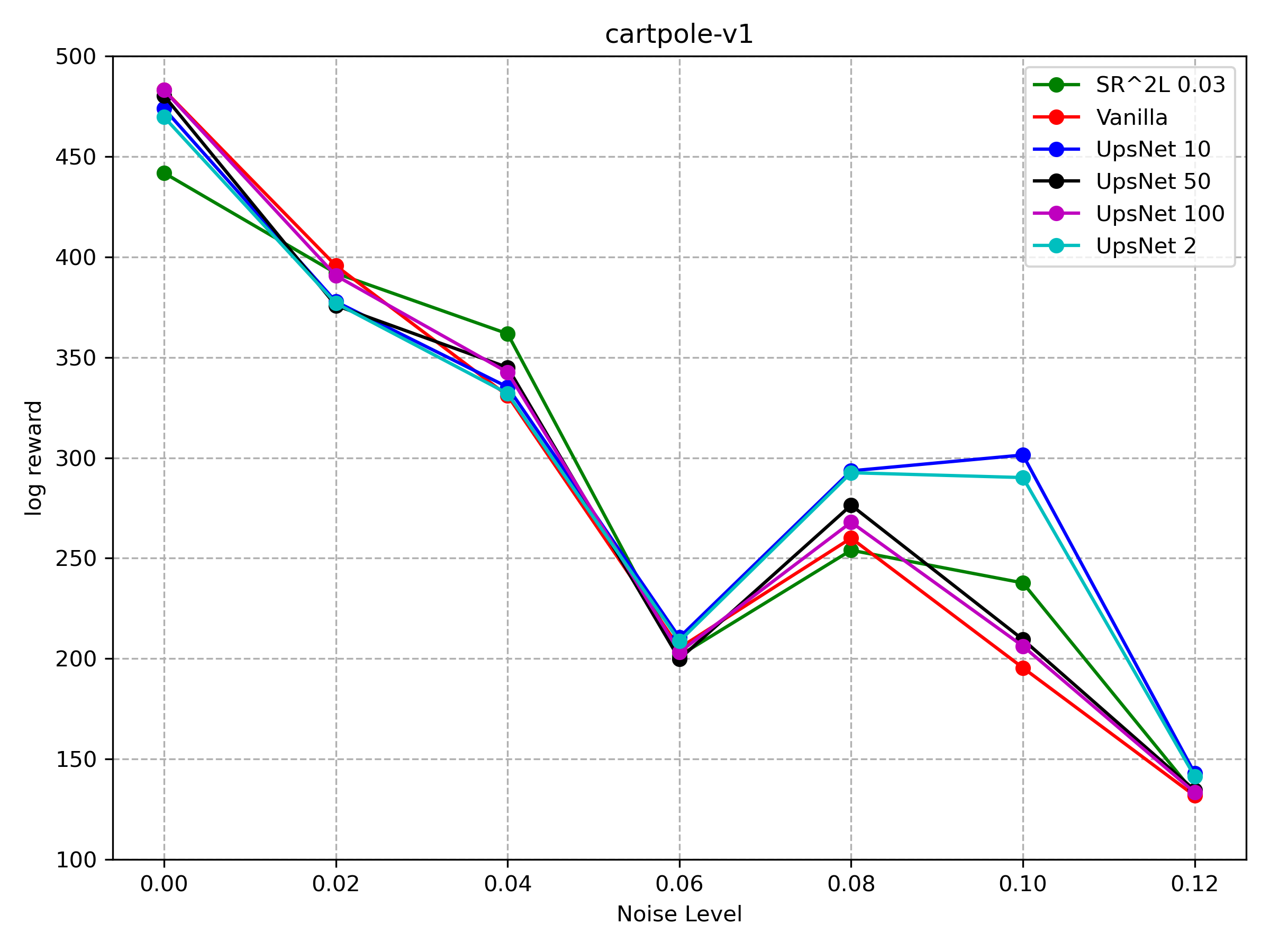}
    \caption*{(d) Cartpole-v0}
\end{minipage}

\vspace{1ex}
\caption{Worst-case mean reward policy-dependent noise at each noise level. (a–c) are deterministic policies; (d) is a stochastic policy.}
\label{fig:worst-average-reward}
\end{figure*}

\begin{figure*}[t]
\centering

\begin{minipage}[b]{0.24\textwidth}
    \centering
    \includegraphics[width=\textwidth]{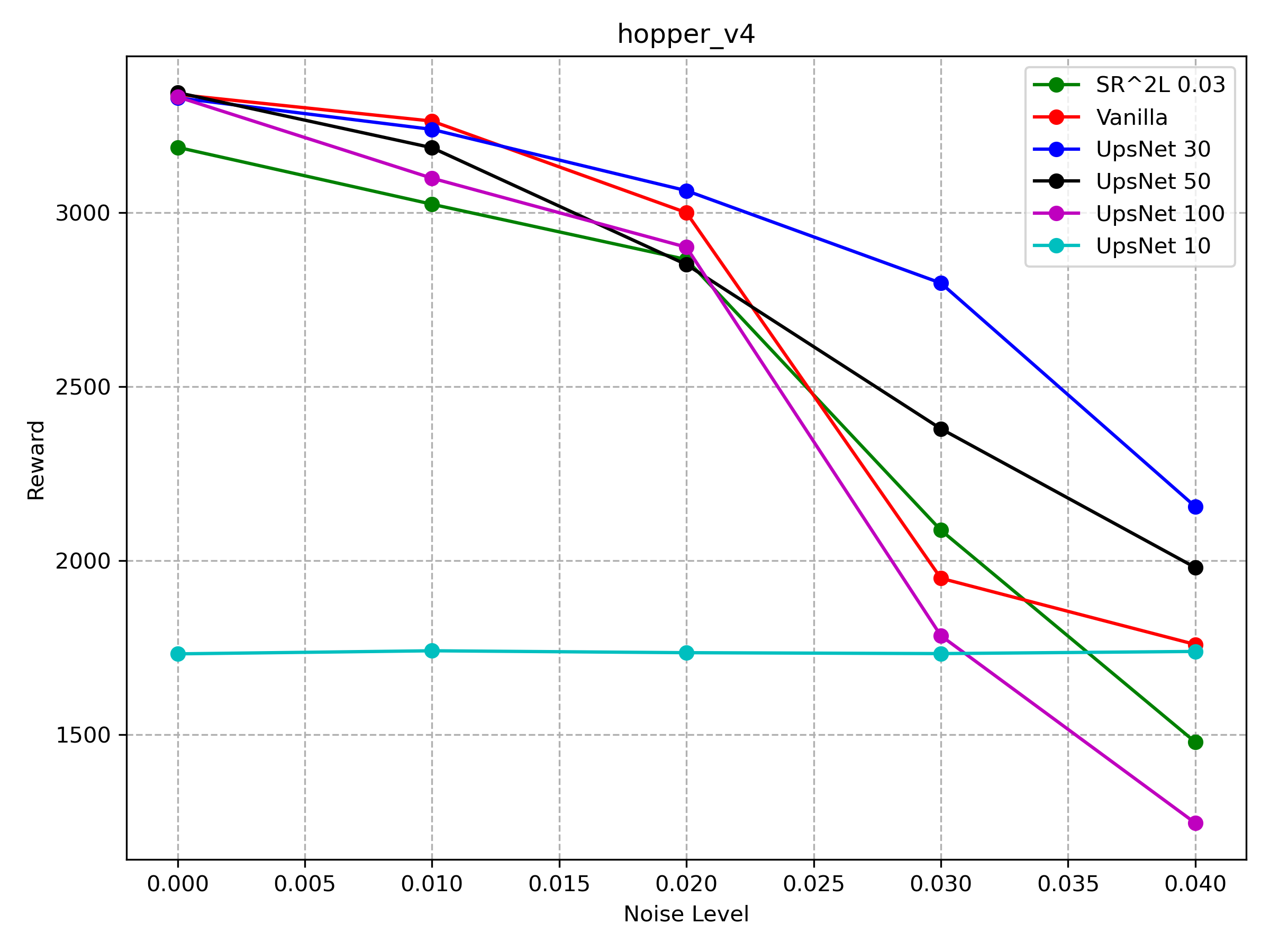}
    \caption*{(a) Hopper-v4}
\end{minipage}
\hfill
\begin{minipage}[b]{0.24\textwidth}
    \centering
    \includegraphics[width=\textwidth]{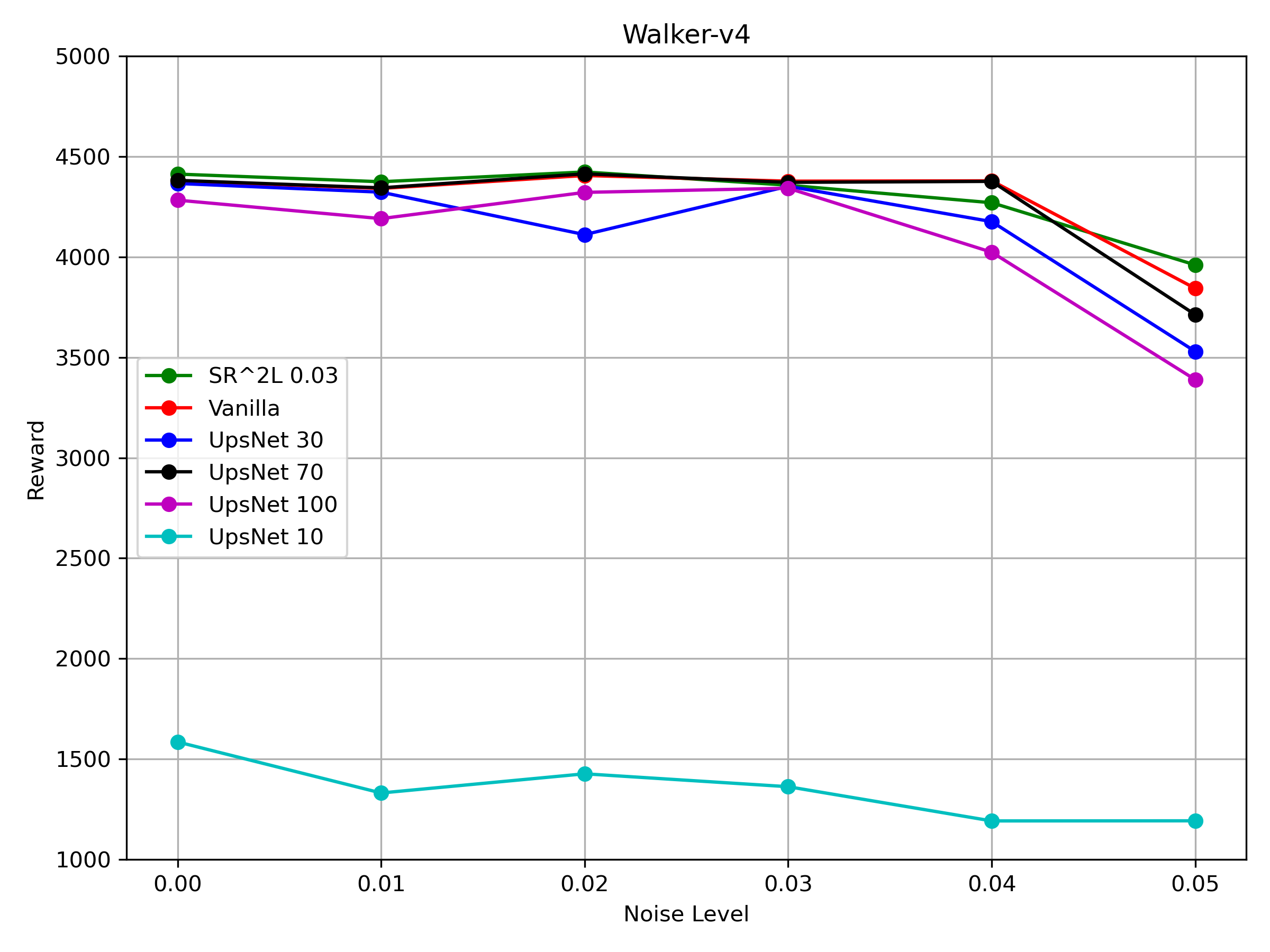}
    \caption*{(b) Walker2d-v4}
\end{minipage}
\hfill
\begin{minipage}[b]{0.24\textwidth}
    \centering
    \includegraphics[width=\textwidth]{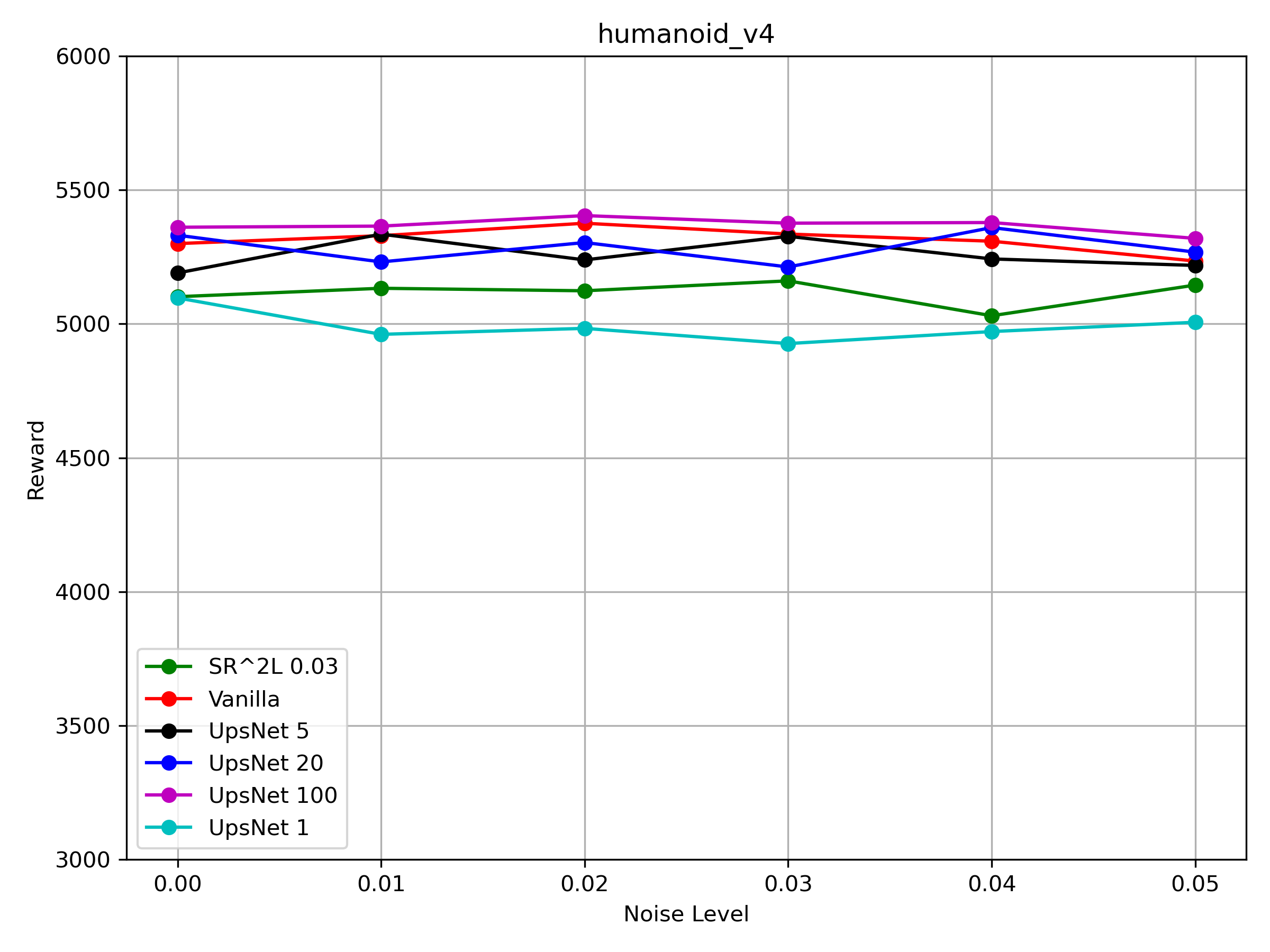}
    \caption*{(c) Humanoid-v4}
\end{minipage}
\hfill
\begin{minipage}[b]{0.24\textwidth}
    \centering
    \includegraphics[width=\textwidth]{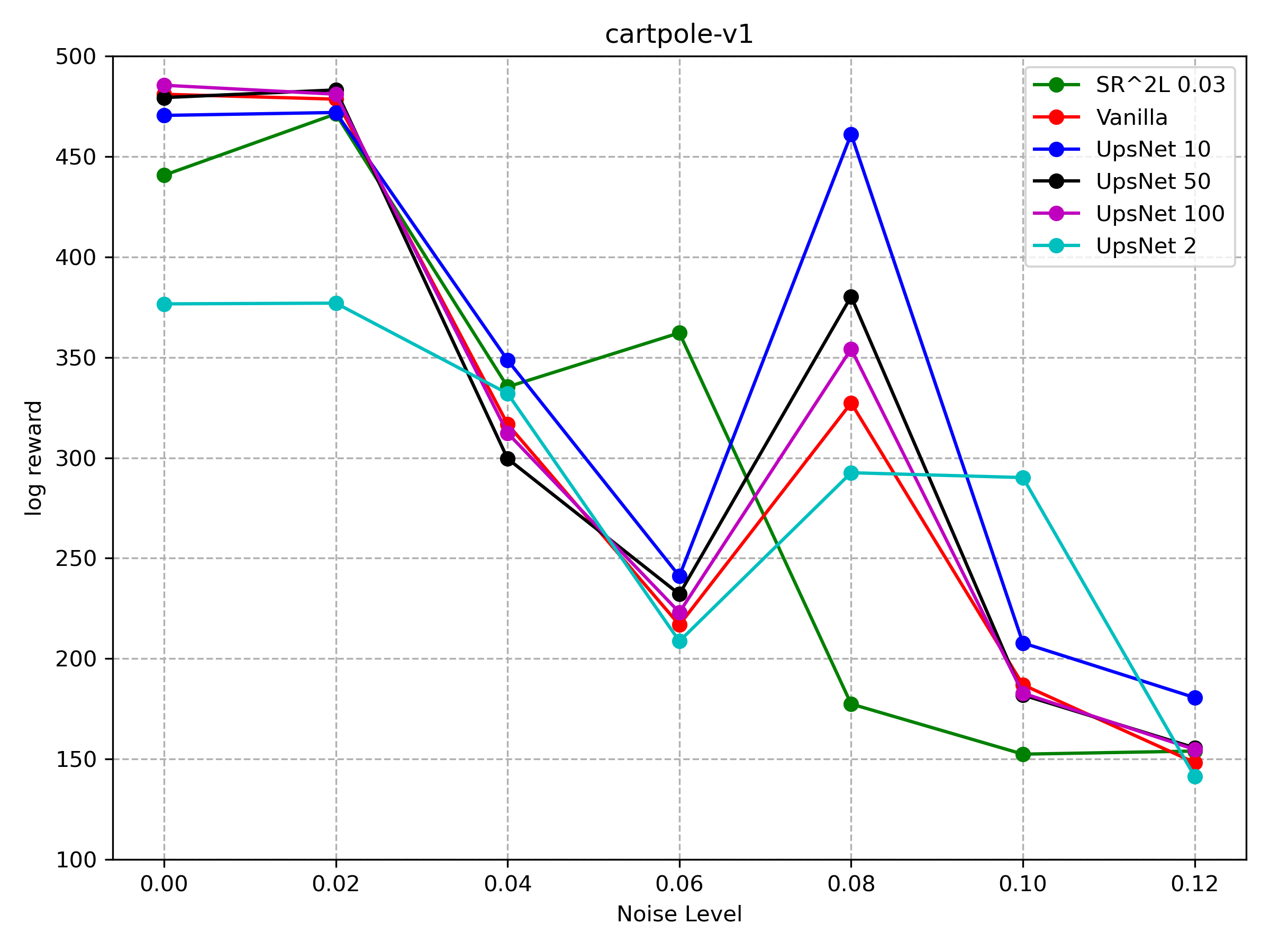}
    \caption*{(d) Cartpole-v0}
\end{minipage}

\vspace{1ex}
\caption{Worst-case mean reward policy-independent noise at each noise level. (a–c) are deterministic policies; (d) is a stochastic policy.}
\label{fig:worst-average-reward-independent}
\end{figure*}

\begin{table*}[t]
\begin{center}
\caption{Average reward of different algorithms facing $l_{\infty}$ and $l_2$ bounded noise. The number after LipsNet indicates the specified Lipschitz constant. SR$^2$L is trained with the corresponding $\epsilon$. We report mean $\pm$ std over 300 episodes. Best performing algorithm under each condition is highlighted.}
\label{tab:combined-performance-no-noise-uncolored}
\footnotesize
\begin{tabular}{|c|c|c|c|c|c|c|c|}
\hline
Environment & $\epsilon$ & Algorithm & No noise & Random & Adversarial & RS & Worst-case \\
\hline
\multirow{3}{*}{Hopper} & \multirow{3}{*}{0.03} & Vanilla & $3340\pm81$ & $2087\pm841$ & $889\pm496$ & $1949\pm924$ & $889 \pm 496$ \\
                        &                      & \cellcolor{purple!60}\textcolor{white}{LipsNet-50} & $3330\pm122$ & \cellcolor{purple!60}\textcolor{white}{$2647\pm854$} & \cellcolor{purple!60}\textcolor{white}{$1094\pm510$} & \cellcolor{purple!60}\textcolor{white}{$2379\pm682$} & \cellcolor{purple!60}\textcolor{white}{$1094 \pm 510$} \\
                        &                      & SR$^2$L & $3182\pm521$ & $1507\pm823$ & $873\pm462$ & $2087\pm1009$ & $873 \pm 462$ \\
\hline
\multirow{3}{*}{Walker2D} & \multirow{3}{*}{0.03} & Vanilla & $4378\pm11$ & \cellcolor{purple!60}\textcolor{white}{$4370\pm213$} & $1394\pm1551$ & $4377\pm43$ & $1394 \pm 1551$ \\
                          &                      & \cellcolor{purple!60}\textcolor{white}{LipsNet-30} & $4370\pm148$ & $4368\pm263$ & \cellcolor{purple!60}\textcolor{white}{$2268\pm1689$} & $4388\pm302$ & \cellcolor{purple!60}\textcolor{white}{$2268 \pm 1689$} \\
                          &                      & SR$^2$L & $4418\pm1112$ & $4347\pm546$ & $1379\pm1606$ & \cellcolor{purple!60}\textcolor{white}{$4405\pm22$} & $1379 \pm 1606$ \\
\hline
\multirow{3}{*}{Humanoid} & \multirow{3}{*}{0.03} & Vanilla & $5399\pm637$ & $5244\pm849$ & $2348\pm1317$ & $5334\pm995$ & $2348 \pm 1317$ \\
                          &                      & \cellcolor{purple!60}\textcolor{white}{LipsNet-5} & $5313\pm1409$ & $5313\pm1295$ & \cellcolor{purple!60}\textcolor{white}{$3548\pm1023$} & $5302\pm1065$ & \cellcolor{purple!60}\textcolor{white}{$3548 \pm 1023$} \\
                          &                      & SR$^2$L & $5449\pm498$ & \cellcolor{purple!60}\textcolor{white}{$5380\pm695$} & $2430\pm955$ & \cellcolor{purple!60}\textcolor{white}{$5398\pm610$} & $2430 \pm 955$ \\
\hline
\multirow{3}{*}{CartPole} & \multirow{3}{*}{0.10} & Vanilla & $483\pm35$ & \cellcolor{purple!60}\textcolor{white}{$480\pm36$} & $195\pm55$ & $186\pm7$ & $186 \pm 7$ \\
                          &                      & \cellcolor{purple!60}\textcolor{white}{LipsNet-10} & $473\pm43$ & $474\pm40$ & \cellcolor{purple!60}\textcolor{white}{$301\pm61$} & \cellcolor{purple!60}\textcolor{white}{$207\pm14$} & \cellcolor{purple!60}\textcolor{white}{$207 \pm 14$} \\
                          &                      & SR$^2$L & $440\pm65$ & $400\pm91$ & $235\pm133$ & $177\pm25$ & $177 \pm 25$ \\
\hline
\end{tabular}
\end{center}
\end{table*}

\section{Experiments} \label{sec:experiments}
\paragraph{Dataset Creation} To thoroughly evaluate the performance of our method, we conducted experiments in continuous and discrete action environments using the \textit{Walker2d-v4}, \textit{Hopper-v4} and \textit{Humanoid-v4} on \textit{MuJoCo} \citep{todorov2012mujoco} and \textit{Cartpole-v1} in Gymnasium \citep{towers_gymnasium_2023}. Due to the lack of a standard dataset for BC, we created our own by first training expert policies. We used the CleanRL \citep{huang2022cleanrl} implementation of Proximal Policy Optimization (PPO) \citep{DBLP:journals/corr/SchulmanWDRK17} to train the stochastic policy for Box2D and Twin Delayed DDPG (TD3) \citep{DBLP:journals/corr/abs-1802-09477} to train the deterministic policy for MuJoCo, using the default hyperparameters specified in \citep{huang2022cleanrl}. Each expert was trained for $10^6$ steps to ensure full convergence, achieving performance values matching those reported by the CleanRL project.
Next, we built our own BC datasets by collecting 100 trajectories from the experts in each environment. For TD3, we removed action noise during training to make the policy purely deterministic during this sampling process. 

\paragraph{Training} Following standard practice, we divide each datasets into 70\% for training and 30\% for validation. We then trained the stochastic policies on Cartpole with Cross-Entropy (CE) loss and the deterministic policies on MuJoCo with Mean Square Error (MSE) loss in train datasets. In addition to the proposed Lipschitz regularization method and the vanilla method, we also considered the adversarial smooth regularization algorithm SR$^2$L \citep{shen2020deep} for comparison. We adapted SR$^2$L to adversarially train the local smooth regularizer. Detailed descriptions, training procedures, and additional hyperparameters are provided in Appendix B.
Our algorithm, LipsNet, includes two hyperparameters: $L_\pi$ and $\lambda$. The parameter $L_\pi$ is designed to control robustness, potentially trading off optimality as discussed in the previous section. 
Other hyperparameters are set to match those of vanilla behavior cloning. Unlike local smoothing regularization methods where $\lambda$ controls the robustness level implicitly \citep{zhang2020robust, zhao2022adversarially}, our $\lambda$ only controls the rate and the power of the regularization, which functions similarly to a learning rate. We set it to $\lambda = 0.001$ for all cases. Each BC policy was trained for $5 \times 10^5$ timesteps to ensure full convergence and we compute the evaluation score every $1000$ steps. Since BC is reward-agnostic, we selected the model with the best evaluation score for testing. Each method was trained with different random seeds four times to ensure an accurate evaluation.  

\paragraph{Testing} To test the efficacy of our algorithm against state perturbations, we used three types of bounded noise at different noise levels, $\delta$. Here, $\delta$ represents the magnitude of the noise applied to the state observation (with respect to the particular norm for the given setting). During the evaluation, agents observed a perturbed state at each step and executed actions based on this perturbed state. We evaluated the performance of our model at $\delta = [0.02, 0.04, 0.06, 0.08, 0.1, 0.12]$ in Cartpole and at $\delta = [0.01, 0.02, 0.03, 0.04, 0.05, 0.06]$ for the other environments.

We consider two types of perturbation. The first is policy-independent noise, which includes random noise, which are perturbations sampled from a zero mean distribution and bounded by either the $\ell_2$ or $\ell_\infty$ norm at a specified noise level, and Robust Sarsa (RS) \citep{zhang2020robust}, an online critic-based method that selects the action within a bounded set that minimizes the expected future return. The second type is policy-dependent noise, specifically Adversarial Noise generated during the training of SR$^2$L \citep{shen2020deep}. This adversarial model learns to produce bounded perturbations that maximize the difference in the policy's output actions, thereby inducing worst-case behavior.

We trained the adversarial and RS models four times at each noise level for the corresponding $2$-norm or $\infty$-norm. Our model was then evaluated 75 times on each trained adversarial and RS model. The mean and standard deviation of the results were reported.

\paragraph{Results}
Figure\ref{fig:worst-average-reward}, Figure\ref{fig:worst-average-reward-independent} and Table \ref{tab:combined-performance-no-noise-uncolored} provide a comparative evaluation of the worst-case performance across various policy learning methods under different types and levels of perturbations. As shown in Table~\ref{tab:combined-performance-no-noise-uncolored}, policy-dependent noise poses stronger attacks, as it is specifically trained to induce worst-case perturbations. Vanilla policies demonstrate poor robustness. In contrast, incorporating Lipschitz regularization during training results in policies that consistently achieve higher worst-case returns across a broad range of noise scenarios. It is worth noting that although the adversarial noise is generated through training an adversarial model, the resulting policies do not outperform those trained with Lipschitz regularization.

Figure \ref{fig:worst-average-reward} further illustrates that lower values of the Lipschitz constant are generally associated with greater robustness, as indicated by a smaller drop in performance relative to the noise-free case. However, this benefit comes with important caveats. If the regularization strength, which is represented by $L_\pi$, is set too high, the resulting constraint becomes too loose, causing the policy to behave similarly to the unregularized vanilla model. On the other hand, when $L_\pi$ is too small, the model can become overly conservative: Although it can maintain performance under perturbations, it often does so at the cost of its performance in noise-free environments. This trade-off between robustness and performance is similar to those in the adversarial training literature \cite{zhang2020robust}, where models trained for robustness against worst-case perturbations often exhibit reduced accuracy under noise-free settings. In general, our results show that global Lipschitz control is a principled and effective strategy to improve policy robustness, particularly valuable in safety-critical or high-stakes domains where noise and disturbances are inevitable.

\section{Discussion}
In this work, we initially investigated the relationship between the global Lipschitz constant and the robustness certificate of the policy. Following this, we introduced a robust behavior cloning (BC) method that utilizes weight normalization to control the Lipschitz upper bound of the neural network. This strategy aims to enhance the robustness of BC policies against observation perturbations by focusing on the network architecture. Our experimental results indicate that our method delivers more consistent robustness performance compared to some local smoothing techniques. Since our modification is applied at the network level and our discussion is based on a general case of the reinforcement learning scenario, it is anticipated that this approach can be extended to online reinforcement learning. However, due to potential inaccuracies in estimating the Lipschitz constant, there may be an issue of over-regularization, where the imitating policy encounters significant interpolation problems. Therefore, it is important to explore methods for regulating the neural network without introducing additional interpolation errors. Furthermore, while our findings show that robustness can be achieved by regulating the neural network under the $l_\infty$-norm it remains valuable to investigate the effects of regulating different weight norms.
\bibliography{aaai2026}

\clearpage
\onecolumn



\pagestyle{empty}


\author{Anonymous Author(s)}


\maketitle
\appendix
\section{Technical Proofs}
\subsection{Preliminary}
\label{sec: appendix prelim}
This section provides additional definitions and lemmas that will be used later.
\begin{definition}
    Consider an infinite horizon discounted MDP $\bM = (\bS, \bA, R, \bP, \gamma)$. Then the state-action function (or $Q$-function) under a policy $\pi$ is the expected discounted sum of rewards obtained under that policy starting from a state-action pair $(s, a)$, where $(s, a) \in \bS \times \bA$,
    \begin{align}
        Q^\pi(s, a) &= \E_\pi \left[\sum_{t=0}^{+\infty} \gamma^t r(s_t, a_t) | s_0 = s, a_0 = a\right].
    \end{align}
\end{definition}
\begin{definition} \cite{zhang2020robust}
    Consider an infinite-horizon discounted SA-MDP $\bM = (\bS, \bA, R, \bP, \gamma, B_\epsilon)$. Then the adversarial state-action value functions under a fixed $\mu$ are similar to that of a regular MDP and are given by
    \begin{align}
        Q^{\pi \circ \mu}(s, a) &= \E_{\pi \circ \mu} \left[\sum_{t=0}^{+\infty} \gamma^t r(s_t, a_t) | s_0 = s, a_0 = a\right]
    \end{align}
\end{definition}

\begin{lemma} \cite{sutton1999reinforcement}
    Given an infinite horizon MDP $\bM = (\bS, \bA, R, \bP, \gamma)$, the value function can be expressed in a recursive form
    \begin{align*}
        V^\pi(s) = \sum_a \pi(a|s)[r(s,a) +  \gamma \sum_{s'} \bP(s' | s, a) V^\pi(s')]
    \end{align*}
\end{lemma}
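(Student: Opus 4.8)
The plan is to prove the recursion directly from the definition of the value function by peeling off the first reward in the discounted sum and recognizing the remainder as a discounted copy of $V^\pi$ evaluated at the successor state. Starting from $V^\pi(s) = \E_\pi\big[\sum_{k=0}^{\infty}\gamma^k R_{t+k+1} \mid s_t = s\big]$, I would first split the $k=0$ term off the series and factor a single $\gamma$ out of the tail, giving $V^\pi(s) = \E_\pi[R_{t+1}\mid s_t=s] + \gamma\,\E_\pi\big[\sum_{j=0}^{\infty}\gamma^{j}R_{t+j+2}\mid s_t=s\big]$ after re-indexing $j = k-1$. The bounded-reward assumption $0\le R\le R_{\max}$ together with $\gamma\in(0,1)$ makes the series absolutely summable (dominated by $\sum_k \gamma^k R_{\max} = R_{\max}/(1-\gamma)$), so the interchange of expectation and the infinite sum is justified by dominated convergence and this split is legitimate.

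For the first term, I would condition on the action drawn at time $t$: since $a_t\sim\pi(\cdot\mid s)$, the tower property gives $\E_\pi[R_{t+1}\mid s_t=s] = \sum_a \pi(a\mid s)\,r(s,a)$. For the discounted tail, I would apply the law of total expectation twice, conditioning first on $a_t=a$ and then on the successor state $s_{t+1}=s'$, which is distributed according to $\bP(s'\mid s,a)$. This yields $\gamma\sum_a\pi(a\mid s)\sum_{s'}\bP(s'\mid s,a)\,\E_\pi\big[\sum_{j=0}^{\infty}\gamma^{j}R_{t+j+2}\mid s_{t+1}=s'\big]$.

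The key step, and the one I expect to require the most care, is identifying the inner conditional expectation with $V^\pi(s')$. This uses two structural facts: the Markov property of $\bP$, which makes the future rewards depend on the past only through $s_{t+1}$, and the stationarity of $\pi$, which ensures that the process restarted at $s_{t+1}=s'$ is distributionally identical to one started at $s'$ from time $0$. Together these give $\E_\pi\big[\sum_{j=0}^{\infty}\gamma^{j}R_{t+j+2}\mid s_{t+1}=s'\big] = V^\pi(s')$. Substituting back and collecting the two terms under the common factor $\pi(a\mid s)$ produces $V^\pi(s)=\sum_a\pi(a\mid s)\big[r(s,a)+\gamma\sum_{s'}\bP(s'\mid s,a)V^\pi(s')\big]$, which is exactly the claimed recursion. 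The only genuine subtleties are the convergence/interchange argument and the careful invocation of the Markov and stationarity properties; everything else is routine bookkeeping.
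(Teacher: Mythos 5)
Your derivation is correct and is the standard first-step (tower-property) argument; the paper itself gives no proof of this lemma, citing it directly to Sutton and Barto, so there is nothing in the paper to diverge from. Your attention to the interchange of expectation and summation (justified by the bounded reward and $\gamma\in(0,1)$) and to the role of the Markov and stationarity properties is exactly where the care belongs.
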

\begin{lemma} (Theorem 1 and a part of that proof in \cite{zhang2020robust}) \label{lemma: value function under noise}
    Given an infinite horizon MDP $\bM = (\bS, \bA, R, \bP, \gamma, B_\epsilon)$, a fixed policy $\pi: \bS \rightarrow \bP(\bA)$, and an adversary $\mu: \bS \rightarrow \bS$, we have
    \begin{align}
        V^{\pi \circ \mu}(s) &= \sum_a \pi(a|\mu(s)) \sum_{s'} p(s'|s,a) [R(s, a, s') + \gamma V^{\pi \circ \mu}(s')]\\
        V^{\pi \circ \mu}(s) &= \sum_a \pi(a|\mu(s)) Q^{\pi \circ \mu}(s, a)
    \end{align}
\end{lemma}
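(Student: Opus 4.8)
The plan is to recognize that the merged policy $\pi \circ \mu$ is, from the point of view of the \emph{true} environment dynamics, nothing more than an ordinary stationary Markov policy, and then to invoke the standard Bellman recursion (the Sutton--Barto lemma quoted just above) for that policy. Concretely, at a true state $s$ the adversary first produces the corrupted observation $\mu(s)$, the agent draws its action from $\pi(\cdot \mid \mu(s))$, and the environment then transitions according to the \emph{unperturbed} kernel $p(\cdot \mid s, a)$. Hence if I set $\tilde{\pi}(a \mid s) := \pi(a \mid \mu(s))$, this $\tilde{\pi}$ is a legitimate stationary policy on $\bM$, and $V^{\pi \circ \mu}$ is exactly its value function. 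The entire argument is therefore careful bookkeeping of which state feeds the policy (the perturbed one, $\mu(s)$) versus which state feeds the transition and reward (the true one, $s$).

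First I would start from the definition $V^{\pi \circ \mu}(s) = \E_{\pi \circ \mu}\big[\sum_{k=0}^\infty \gamma^k R_{t+k+1} \mid s_t = s\big]$ and peel off the first reward term, writing the discounted sum as $R_{t+1} + \gamma \sum_{k=0}^\infty \gamma^k R_{t+k+2}$. Because the action at time $t$ is sampled as $a_t \sim \pi(\cdot \mid \mu(s))$ and the next state as $s_{t+1} \sim p(\cdot \mid s, a_t)$, the expectation of the leading term factors as $\sum_a \pi(a \mid \mu(s)) \sum_{s'} p(s' \mid s, a)\, R(s, a, s')$.

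Next I would treat the tail. Conditioning on the realized action $a$ and next state $s'$ and using the Markov property together with the stationarity of $\pi$, $\mu$, and $p$, the conditional expectation of the discounted tail started from $s_{t+1} = s'$ is exactly $V^{\pi \circ \mu}(s')$. Applying the tower rule over $a \sim \pi(\cdot \mid \mu(s))$ and $s' \sim p(\cdot \mid s, a)$ yields $\gamma \sum_a \pi(a \mid \mu(s)) \sum_{s'} p(s' \mid s, a)\, V^{\pi \circ \mu}(s')$ for the tail. Adding the two contributions and grouping the inner sum over $s'$ gives the first displayed identity. The second identity is then immediate once one observes that the inner expression $\sum_{s'} p(s' \mid s, a)[R(s, a, s') + \gamma V^{\pi \circ \mu}(s')]$ is precisely the first-step expansion of $Q^{\pi \circ \mu}(s, a)$ (the same peeling argument with the initial action held fixed at $a$); factoring it out of the outer sum over $a$ produces $V^{\pi \circ \mu}(s) = \sum_a \pi(a \mid \mu(s))\, Q^{\pi \circ \mu}(s, a)$.

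The only genuine subtlety — and the step I would be most careful with — is the justification that the tail expectation collapses to $V^{\pi \circ \mu}(s')$. This requires that $\pi \circ \mu$ induce a time-homogeneous Markov chain on $\bS$, which holds because $\mu$ is stationary and deterministic and $\pi$ depends only on the current (perturbed) state. With $\gamma \in (0,1)$ and $0 \le R \le R_{\max}$ the series converges absolutely, so interchanging expectation and summation and conditioning on $s_{t+1}$ are all legitimate. Everything else is routine dynamic-programming bookkeeping, and I would not expect any hidden difficulty beyond keeping the perturbed and true states cleanly separated.
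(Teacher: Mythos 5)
Your proposal is correct and follows essentially the same route as the source the paper cites for this lemma (Theorem~1 of Zhang et al.\ 2020): identify the merged policy $\pi\circ\mu$ with the ordinary stationary policy $\tilde{\pi}(a\mid s)=\pi(a\mid\mu(s))$ on the unperturbed MDP, then read off the standard Bellman recursion, keeping the perturbed state in the policy argument and the true state in the transition and reward. The paper itself offers no independent proof beyond this citation, so there is nothing further to reconcile.
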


\subsection{Robust Certificate}
\label{proof: Lpi robust certificate}
\subsubsection{Robust Certificate for a Stochastic Policy}
We present our proof on drawing the robust certificate in the stochastic setting. Our proof is similar to that in \cite{zhang2020robust}. We start by giving the formal definition of the distance metric $\mathrm{D}_{TV}$, then show a universal bound on the value function in prop. \ref{prop:max V, Q}, and then finally use it with the performance difference lemma and the definition of Lipschitzness to show the thm. \ref{thm: robust cerficate for stochastic policy}.
\begin{definition}(prop 4.2 in \cite{levin2017markov})
    \label{def: TV}
    Let $\Omega$ be a set and $\Sigma$ be a $\sigma$-algebra over $\Omega$. We denote with $\bP(\Omega)$ the set of probability measures over the measurable space $(\Omega, \Sigma)$. Then for any two probability measures $\mu, \nu \in \bP(\Omega)$, the $TV(\cdot, \cdot)$ is defined as 
    \begin{align}
        \mathrm{D}_{TV}(\mu, \nu) = \frac{1}{2}\sum_{x \in \Omega} |\mu(x) - \nu(x)| = \frac{1}{2} \Norm{\mu(x) - \nu(x)}_1
    \end{align}
\end{definition} 

\begin{proposition} \label{prop:max V, Q}
    Consider an infinite-horizon discounted MDP $\bM = \{\bS, \bA, R, \bP, \gamma \}$, with a bounded reward function $\Norm{R}_\infty \in [0, R_{\max}]$ and an arbitrary policy $\pi$. Then, $\forall (s, a) \in \bS \times \bA$, $V^\pi(s) \leq \frac{1}{1-\gamma} R_{\max}$ and $Q^\pi(s, a) \leq \frac{1}{1-\gamma} R_{\max}$ 
\end{proposition}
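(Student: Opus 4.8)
The plan is to bound the infinite discounted sum term-by-term using the uniform reward bound and then sum the resulting geometric series. First I would start from the definition of the value function, $V^\pi(s) = \E_\pi\left[\sum_{k=0}^\infty \gamma^k R_{t+k+1} \mid s_t = s\right]$, and invoke the hypothesis $\Norm{R}_\infty \in [0, R_{\max}]$, which gives $0 \le R_{t+k+1} \le R_{\max}$ along every trajectory realizable under $\pi$. Consequently the random integrand is dominated pathwise by the deterministic series $\sum_{k=0}^\infty \gamma^k R_{\max}$, which converges because $\gamma \in (0,1)$.

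Next I would pass the bound through the expectation. Since every reward term is nonnegative, monotone convergence (equivalently, applying monotonicity and linearity of $\E_\pi$ to each finite partial sum and letting the number of terms go to infinity) justifies interchanging $\E_\pi$ with the infinite summation, yielding $V^\pi(s) \le \sum_{k=0}^\infty \gamma^k R_{\max} = \frac{1}{1-\gamma} R_{\max}$. I would note in passing that the companion lower bound $V^\pi(s) \ge 0$ follows identically from $R \ge 0$, although only the stated upper bound is needed in the downstream proof.

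The argument for the $Q$-function is verbatim the same: writing $Q^\pi(s,a) = \E_\pi\left[\sum_{k=0}^\infty \gamma^k R_{t+k+1} \mid s_t = s, a_t = a\right]$, the identical pathwise domination and geometric summation give $Q^\pi(s,a) \le \frac{1}{1-\gamma} R_{\max}$; the only difference is the extra conditioning on $a_t = a$, which leaves the uniform reward bound untouched. There is no substantive obstacle here, since the statement is a routine consequence of the boundedness of $R$ together with $\gamma < 1$; the only point deserving a word of care is the exchange of the expectation and the infinite sum, which is legitimate precisely because the summands are nonnegative and uniformly bounded.
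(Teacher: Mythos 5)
Your proof is correct and follows essentially the same route as the paper's: bound each reward term by $R_{\max}$, pull the bound through the expectation, and sum the geometric series. The only difference is that you explicitly justify the interchange of expectation and infinite sum via monotone convergence, a point the paper's proof passes over silently.
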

\begin{proof}
    By definition of $V^\pi$, we have
    \begin{align*}
        V^\pi(s) = \E_{\pi} \left[\sum_{t=0}^{+\infty} \gamma^t R_t | s_0 = s\right] \leq \E_{\pi} \left[\sum_{t=0}^{+\infty} \gamma^t R_{\max} | s_0 = s\right] =  R_{\max} \sum_{t=0}^{+\infty} \gamma^t = \frac{1}{1-\gamma} R_{\max}
    \end{align*}
    Following the same procedure, we can derive the bound for $Q^\pi(s,a)$ as well. 
\end{proof}
\begin{theorem}
\label{thm: robust cerficate for stochastic policy}
    Consider an infinite horizon SA-MDP $\bM = \{ \bS, \bA, R, \bP, \gamma, B_\epsilon \}$ with a bounded reward function $0 \leq R \leq R_{\max}$ and an $\pi$ with a local lipschitz function $L^\epsilon_\pi (s)$. Then, for small enough $\epsilon$ and any fixed bounded adversary $\mu: \bS \rightarrow \bS$
    \begin{align*}
        \Theta(\pi) = \max_s [V^\pi(s) - V^{\pi \circ \mu}(s)] \leq \frac{R_{\max}}{(1-\gamma)^2} L^\epsilon_\pi(s) \epsilon
    \end{align*}
\end{theorem}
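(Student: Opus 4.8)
The plan is to invoke the Performance Difference Lemma (PDL) with $\pi$ as the comparison policy and the merged policy $\pi\circ\mu$ as the baseline, and then to control the resulting advantage term by the total variation distance between $\pi(\cdot\mid s)$ and $\pi(\cdot\mid\mu(s))$, which the local Lipschitz function bounds by $L^\epsilon_\pi(s)\,\epsilon$. Concretely, for a fixed initial state $s_0$, write $d^\pi_{s_0}$ for the $\gamma$-discounted state-visitation distribution of $\pi$ started at $s_0$ and $A^{\pi\circ\mu}(s,a)=Q^{\pi\circ\mu}(s,a)-V^{\pi\circ\mu}(s)$ for the advantage of the baseline. PDL then gives
\[
V^\pi(s_0)-V^{\pi\circ\mu}(s_0)=\frac{1}{1-\gamma}\,\E_{s\sim d^\pi_{s_0}}\,\E_{a\sim\pi(\cdot\mid s)}\big[A^{\pi\circ\mu}(s,a)\big].
\]
I would rewrite the inner expectation using the recursion $V^{\pi\circ\mu}(s)=\sum_a\pi(a\mid\mu(s))\,Q^{\pi\circ\mu}(s,a)$ from Lemma \ref{lemma: value function under noise}, which turns it into the signed sum $\sum_a[\pi(a\mid s)-\pi(a\mid\mu(s))]\,Q^{\pi\circ\mu}(s,a)$.

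The crux is to bound this signed sum by a TV distance with the correct constant. Since $\sum_a[\pi(a\mid s)-\pi(a\mid\mu(s))]=0$, I may subtract any state-dependent constant $c(s)$ from $Q^{\pi\circ\mu}(s,\cdot)$ without changing the sum; taking $c(s)$ to be the midpoint of the range of $Q^{\pi\circ\mu}(s,\cdot)$ and using $0\le Q^{\pi\circ\mu}\le R_{\max}/(1-\gamma)$ from Prop. \ref{prop:max V, Q} gives $|Q^{\pi\circ\mu}(s,a)-c(s)|\le \tfrac12 R_{\max}/(1-\gamma)$. Pairing the $\ell_1$ and $\ell_\infty$ norms then yields
\[
\sum_a\big[\pi(a\mid s)-\pi(a\mid\mu(s))\big]\,Q^{\pi\circ\mu}(s,a)\le \frac{R_{\max}}{2(1-\gamma)}\,\Norm{\pi(\cdot\mid s)-\pi(\cdot\mid\mu(s))}_1=\frac{R_{\max}}{1-\gamma}\,\mathrm{D}_{TV}\big(\pi(\cdot\mid s),\pi(\cdot\mid\mu(s))\big),
\]
the last equality being Def. \ref{def: TV}. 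I expect this centering step to be the main obstacle: without exploiting that the coefficients sum to zero one picks up an extra factor of two, so matching the stated constant $R_{\max}/(1-\gamma)^2$ exactly hinges on it.

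Finally I would convert TV into the Lipschitz bound. Because $\mu(s)\in B_\epsilon(s)$ we have $d_\bS(s,\mu(s))\le\epsilon$, so the definition of the local Lipschitz function gives $\mathrm{D}_{TV}(\pi(\cdot\mid s),\pi(\cdot\mid\mu(s)))\le L^\epsilon_\pi(s)\,d_\bS(s,\mu(s))\le L^\epsilon_\pi(s)\,\epsilon$. Substituting back into the PDL identity, replacing the visitation-weighted average of $L^\epsilon_\pi$ by its worst-case value over states, and combining the two factors of $(1-\gamma)^{-1}$ yields
\[
V^\pi(s_0)-V^{\pi\circ\mu}(s_0)\le \frac{R_{\max}}{(1-\gamma)^2}\,L^\epsilon_\pi(s)\,\epsilon
\]
uniformly in $s_0$ and in the bounded adversary $\mu$. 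Taking the maximum over $s_0$ gives $\Theta(\pi)$, and since the bound holds for every admissible $\mu$ it also holds for the worst-case adversary appearing in the definition of $\Theta$. The ``small enough $\epsilon$'' hypothesis only ensures that $\mu(s)$ lies in the neighborhood on which $L^\epsilon_\pi(s)$ is defined, which here is automatic from $\mu(s)\in B_\epsilon(s)$.
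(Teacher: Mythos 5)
Your proposal follows essentially the same route as the paper's proof: the Performance Difference Lemma with $\pi\circ\mu$ as the baseline, the identity $V^{\pi\circ\mu}(s)=\sum_a\pi(a\mid\mu(s))\,Q^{\pi\circ\mu}(s,a)$ from Lemma \ref{lemma: value function under noise} to turn the advantage into the signed sum $\sum_a[\pi(a\mid s)-\pi(a\mid\mu(s))]\,Q^{\pi\circ\mu}(s,a)$, the uniform bound $Q^{\pi\circ\mu}\le R_{\max}/(1-\gamma)$ from Proposition \ref{prop:max V, Q}, and the local Lipschitz control of the total variation distance. The one step where you genuinely diverge is the passage from the signed sum to a TV distance: you exploit $\sum_a[\pi(a\mid s)-\pi(a\mid\mu(s))]=0$ to recenter $Q^{\pi\circ\mu}(s,\cdot)$ at its midpoint before pairing $\ell_1$ with $\ell_\infty$, which legitimately produces the coefficient $R_{\max}/(1-\gamma)$ in front of $\mathrm{D}_{TV}$. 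The paper instead goes directly from $[\max Q]\sum_a|\pi(a\mid s)-\pi(a\mid\mu(s))|$ to $[\max Q]\,\Norm{\cdot}_{TV}$, which under its own Definition \ref{def: TV} (where $\mathrm{D}_{TV}=\frac{1}{2}\Norm{\cdot}_1$) silently absorbs a factor of $2$; your centering argument is the clean way to arrive at the stated constant $\frac{R_{\max}}{(1-\gamma)^2}$, so your version is a refinement of, rather than a departure from, the paper's argument.
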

\begin{proof} 
    Inspired by the proof presented in \cite{maran2022tight}, we start from the Performance Difference Lemma \cite{Kakade2002ApproximatelyOA}, where the advantage function $A^\pi(s, a)$ of a policy $\pi$ is defined by $A^\pi(s, a) = Q^\pi(s, a) - V^\pi(s)$:
    \begin{align*}
        V^\pi(s) - V^{\pi \circ \mu}(s) &= \frac{1}{1-\gamma} \mathbb{E}_{s \sim d^\pi} [\mathbb{E}_{a \sim \pi(\cdot|s)} [A^{\pi \circ \mu}(s, a)]]\\
        &= \frac{1}{1-\gamma} \mathbb{E}_{s \sim d^\pi} [\mathbb{E}_{a \sim \pi(\cdot|s)} [Q^{\pi \circ \mu}(s, a) - V^{\pi \circ \mu}(s)]]\\
        &\overset{(i)}{=} \frac{1}{1-\gamma} \mathbb{E}_{s \sim d^\pi} [\mathbb{E}_{a \sim \pi(\cdot|s)} [Q^{\pi \circ \mu}(s, a)] - \E_{a \sim \pi(\cdot|\mu(s)} [Q^{\pi \circ \mu}(s, a)]]\\
        &\overset{(ii)}{=} \frac{1}{1-\gamma} \mathbb{E}_{s \sim d^\pi} \sum_{a \in \bA} Q^{\pi \circ \mu}(s, a) (\pi(a|s) - \pi(a|\mu(s)) \\
        &\leq \frac{1}{1-\gamma} [\max_{(s, a) \in \bS \times \bA} Q^{\pi \circ \mu}(s, a)] \mathbb{E}_{s \sim d^\pi} \sum_{a\in \bA} |\pi(a|s) - \pi(a|\mu(s))|\\
        &\overset{(iii)}{=} \frac{1}{1-\gamma} [\max_{(s, a) \in \bS \times \bA} Q^{\pi \circ \mu} (s, a)] \mathbb{E}_{s \sim d^\pi} \Norm{\pi(a|s) - \pi(a|\mu(s))}_{TV}\\
        &\overset{(iv)}{\leq} \frac{R_{\max}}{(1-\gamma)^2} \mathbb{E}_{s \sim d^\pi} L_\pi d_\bS(s, \mu(s)) \overset{(v)}{\leq} \frac{R_{\max}}{(1-\gamma)^2} L^\epsilon_\pi(s) \epsilon
    \end{align*}
    where $(i)$ is by lemma \ref{lemma: value function under noise}, $(ii)$ is by expanding the expectation, $(iii)$ is by the definition of TV distance, $(iv)$ is by proposition \ref{prop:max V, Q} and the definition of local smoothness function $L^\epsilon_\pi(s)$ for a policy $\pi$, and the last inequality follows from the fact that the weighted average is upper bounded by the maximum value. 
\end{proof}

\subsubsection{Robust Certificate for a Deterministic Policy}
We consider deterministic policies to be stochastic policies $\pi: (\mathbb{R}^n, \Norm{\cdot}_p) \rightarrow (\delta(\mathbb{R}^m), \bW(\cdot, \cdot))$, where $\delta(\cdot)$ is a Dirac measure in the corresponding action space $\mathbb{R}^m$ and the $\bW(\cdot, \cdot)$ is Wasserstein Distance (Definition \ref{def: W distance}). We then use Theorem \ref{thm: robust certificate under W} to show that an $L_\pi^W$-LC (Definition \ref{def: LpiW}) policy can provide a robustness certificate, which has a similar form to that of Theorem \ref{thm: performance drop}. 

\begin{definition} \label{def: W distance}
    Let $\Omega$ be a set and $\Sigma$ be a $\sigma$-algebra over $\Omega$. We denote with $\bP(\Omega)$ the set of probability measures over the measurable space $(\Omega, \Sigma)$. For any two probability measures $\mu, \nu \in \bP(\Omega)$ and a cost function $f(\omega)$, the $L_1$-Wasserstein distance \cite{villani2009optimal} can be defined as:
    \begin{align}
        \bW(\mu, \nu) = \sup_{\Norm{f}_L \leq 1} \left|\int_\Omega f(\omega)(\mu - \nu)d\omega\right|
    \end{align}
\end{definition}
\begin{definition} \label{def: LpiW}
    A policy $\pi$ is said to be $L_\pi^\bW$-continuous if $\forall s, s' \in \bS$, $\bW(\pi(s), \pi(s'))\leq L_\pi^\bW D_\bS(s, s')$
\end{definition}
\begin{lemma} \label{thm: neural network design in W}
    If a neural network $f: (\mathbb{R}^n, \Norm{\cdot}_p) \rightarrow (\mathbb{R}^m, \Norm{\cdot}_p)$ is $L_\pi$-LC, then it is also $L_\pi$-LC in $(\mathbb{R}^n, \Norm{\cdot}_p) \rightarrow (\delta(\mathbb{R}^m), \bW(\cdot, \cdot))$
\end{lemma}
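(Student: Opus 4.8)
The plan is to reduce the statement to the classical fact that the $L_1$-Wasserstein distance between two Dirac measures coincides with the ground distance between their support points. Since $f$ is deterministic, I identify its output $f(x)$ with the Dirac measure $\delta_{f(x)} \in \delta(\mathbb{R}^m)$, so that proving the Wasserstein–Lipschitz bound amounts to controlling $\bW(\delta_{f(x)}, \delta_{f(y)})$ in terms of $\Norm{x - y}_p$.

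First I would establish the key identity: for any $u, v \in \mathbb{R}^m$,
\[
\bW(\delta_u, \delta_v) = \Norm{u - v}_p.
\]
Using the Kantorovich–Rubinstein dual form from Definition \ref{def: W distance} (writing the test function as $g$ to avoid clashing with the network $f$, and interpreting the integral against a Dirac measure-theoretically as evaluation), one has $\int_{\mathbb{R}^m} g\, d(\delta_u - \delta_v) = g(u) - g(v)$, so $\bW(\delta_u, \delta_v) = \sup_{\Norm{g}_L \leq 1} |g(u) - g(v)|$. For the upper bound, every admissible $g$ satisfies $|g(u) - g(v)| \leq \Norm{u - v}_p$ directly from the definition of the Lipschitz semi-norm, giving $\bW(\delta_u, \delta_v) \leq \Norm{u - v}_p$. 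For the matching lower bound, I would exhibit the explicit test function $g_0(\omega) = \Norm{\omega - v}_p$, which is $1$-Lipschitz in the $p$-norm by the triangle inequality and attains $g_0(u) - g_0(v) = \Norm{u - v}_p$; hence the supremum is at least $\Norm{u - v}_p$, closing the identity.

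With the identity in hand, the conclusion follows by chaining: for any $x, y \in \mathbb{R}^n$,
\[
\bW\big(\delta_{f(x)}, \delta_{f(y)}\big) = \Norm{f(x) - f(y)}_p \leq L_\pi \Norm{x - y}_p,
\]
where the inequality is exactly the hypothesis that $f$ is $L_\pi$-LC as a map $(\mathbb{R}^n, \Norm{\cdot}_p) \to (\mathbb{R}^m, \Norm{\cdot}_p)$. Comparing with Definition \ref{def: LpiW}, this shows $f$ is $L_\pi^\bW$-continuous with $L_\pi^\bW = L_\pi$, which is precisely the claim.

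The main obstacle I anticipate lives entirely in the Wasserstein identity, specifically the lower bound: one must produce a concrete admissible test function achieving equality and verify carefully that its Lipschitz semi-norm is computed against the correct $p$-norm on $\mathbb{R}^m$ (the choice $g_0(\omega) = \Norm{\omega - v}_p$ does exactly this). A secondary point of care is interpreting the integral in Definition \ref{def: W distance} for Dirac measures, since they carry no Lebesgue density; once the integral is read as $\int g\, d\mu - \int g\, d\nu$, the remaining steps are routine.
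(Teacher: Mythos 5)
Your proposal is correct and follows essentially the same route as the paper: both reduce the claim to the identity $\bW(\delta_u,\delta_v)=\Norm{u-v}_p$ and then apply the Lipschitz hypothesis on $f$. The only difference is that you prove this Dirac-measure identity from the Kantorovich--Rubinstein dual (with the explicit test function $g_0(\omega)=\Norm{\omega-v}_p$ for the lower bound), whereas the paper simply cites it, so your argument is a slightly more self-contained version of the same proof.
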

\begin{proof}
    $\forall x, y \in \mathbb{R}^n$
    \begin{align*}
        \bW(\delta(f(x)), \delta(f(y))) &= \Norm{f(x) - f(y)}_p \leq L_\pi\Norm{x-y}_p
    \end{align*}
    The first equality is because if two distributions are Dirac measures at $x, y$, then $\bW(\delta(x), \delta(y)) = d_X(x, y)$ \cite{maran2022tight}.
\end{proof}

\begin{lemma} (Generalized version of Theorem 3.1 in \cite{bukharin2023robust})
\label{lemma: Lips of Q function}
    Consider an infinite horizon MDP $\bM = \{ \bS, \bA, R, \bP, \gamma \}$ with $L_r$-LC reward function $r$ such that $|r(s, a) - r(s', a')| \leq L_r[\mathrm{D}_\bA(a, a') + \mathrm{D}_\bS(s, s')]$. Then the Q-function of any policy $\pi$ is $L_Q$-LC, that is
    \begin{align}
        |Q(s,a) - Q(s', a')| \leq L_Q [\mathrm{D}_\bA(a, a') + \mathrm{D}_\bS(s, s')]
    \end{align}
    where $L_Q := L_r + \frac{\gamma R_{max}}{1 - \gamma}$.
\end{lemma}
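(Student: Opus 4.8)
The plan is to exploit the Bellman recursion for the $Q$-function and reduce the whole estimate to the boundedness of the value function together with the two Lipschitz hypotheses of the MDP. I would start by writing, for any policy $\pi$ and any pair $(s,a),(s',a')\in\bS\times\bA$,
\begin{align*}
    Q^\pi(s,a) = r(s,a) + \gamma\,\E_{\tilde s\sim p(\cdot|s,a)}\!\left[V^\pi(\tilde s)\right],
\end{align*}
where $V^\pi(\tilde s)=\E_{\tilde a\sim\pi(\cdot|\tilde s)}[Q^\pi(\tilde s,\tilde a)]$. Subtracting the same expression at $(s',a')$ and applying the triangle inequality splits the difference into a reward term and a transition term,
\begin{align*}
    \left|Q^\pi(s,a)-Q^\pi(s',a')\right| \leq \left|r(s,a)-r(s',a')\right| + \gamma\left|\E_{\tilde s\sim p(\cdot|s,a)}\!\left[V^\pi(\tilde s)\right]-\E_{\tilde s\sim p(\cdot|s',a')}\!\left[V^\pi(\tilde s)\right]\right|.
\end{align*}

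The first term is handled directly by the $L_r$-Lipschitz reward hypothesis, giving $L_r[\mathrm{D}_\bA(a,a')+\mathrm{D}_\bS(s,s')]$. The crux is the second term. The key observation is that one only needs $V^\pi$ to be \emph{bounded}, not itself Lipschitz, so no contraction or self-referential fixed-point estimate on the Lipschitz constant of $V^\pi$ is required: by Proposition \ref{prop:max V, Q} we have $0\le V^\pi\le \frac{R_{max}}{1-\gamma}$. Writing the expectation gap as an integral of $V^\pi$ against the signed measure $p-q$, with $p:=p(\cdot|s,a)$, $q:=p(\cdot|s',a')$, I would exploit that $p-q$ integrates to zero to recenter $V^\pi$ about the midpoint $c=\frac{R_{max}}{2(1-\gamma)}$ of its range, so that $\Norm{V^\pi-c}_\infty\le\frac{R_{max}}{2(1-\gamma)}$. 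Then, using $\int|p-q|=2\,\mathrm{D}_{TV}(p,q)$ from Definition \ref{def: TV},
\begin{align*}
    \left|\int V^\pi\,d(p-q)\right| = \left|\int (V^\pi-c)\,d(p-q)\right| \le \frac{R_{max}}{2(1-\gamma)}\int|p-q| = \frac{R_{max}}{1-\gamma}\,\mathrm{D}_{TV}(p,q).
\end{align*}

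Applying the transition Lipschitz hypothesis $\mathrm{D}_{TV}(p,q)\le L_\bP[\mathrm{D}_\bA(a,a')+\mathrm{D}_\bS(s,s')]$ and adding the two contributions yields $|Q^\pi(s,a)-Q^\pi(s',a')|\le\left(L_r+\frac{\gamma R_{max}}{1-\gamma}L_\bP\right)[\mathrm{D}_\bA(a,a')+\mathrm{D}_\bS(s,s')]$, i.e. the claim with $L_Q=L_r+\frac{\gamma R_{max}}{1-\gamma}L_\bP$; note that the transition Lipschitz constant $L_\bP$ of the Lipschitz-MDP assumption necessarily enters the coefficient. The main subtlety I anticipate is precisely this transition term: a naive bound $|\int V^\pi\,d(p-q)|\le\Norm{V^\pi}_\infty\int|p-q|$ loses a factor of two, and obtaining the stated (tight) constant requires the zero-sum recentering of $V^\pi$ before taking the supremum norm. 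Recognizing that the value-expectation gap can be controlled through boundedness plus TV distance, rather than through a Lipschitz bound on $V^\pi$ itself, is what makes the argument one-shot rather than recursive.
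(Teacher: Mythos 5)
Your proposal is correct and follows the same basic route as the paper's proof: expand $Q^\pi$ via the Bellman recursion, split off the reward gap and bound it by the $L_r$-Lipschitz hypothesis, and control the expectation gap $\E_{\tilde s\sim p(\cdot|s,a)}[V^\pi]-\E_{\tilde s\sim p(\cdot|s',a')}[V^\pi]$ using only the boundedness of $V^\pi$ (Proposition~\ref{prop:max V, Q}) together with the TV-Lipschitzness of the transition kernel; no recursion on a Lipschitz constant of $V^\pi$ is needed in either argument. The one genuine difference is your recentering of $V^\pi$ about the midpoint of its range before applying the sup-norm bound, and this is actually a substantive improvement. The paper bounds the transition term by $\gamma\,\|V^\pi\|_\infty\sum_{s_1}|P(s_1|s,a)-P(s_1|s',a')|$ and then invokes $\|V^\pi\|_\infty\le \frac{R_{\max}}{1-\gamma}$ and the Lipschitz-MDP hypothesis; but since the paper's own Definition~\ref{def: TV} of $\mathrm{D}_{TV}$ carries a factor $\frac12$, one has $\sum_{s_1}|P-P'|=2\,\mathrm{D}_{TV}\le 2L_P\,[\mathrm{D}_\bA(a,a')+\mathrm{D}_\bS(s,s')]$, so that chain of inequalities, taken literally, yields $L_r+\frac{2\gamma R_{\max}L_P}{1-\gamma}$ --- a factor of $2$ is silently dropped. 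Your recentering (using $\int d(p-q)=0$ so that $\|V^\pi-c\|_\infty\le\frac{R_{\max}}{2(1-\gamma)}$ suffices) is exactly what is needed to legitimately arrive at the constant $L_r+\frac{\gamma R_{\max}L_P}{1-\gamma}$ that appears in the last line of the paper's proof. You are also right to insist that $L_P$ must enter the coefficient: the lemma statement writes $L_Q=L_r+\frac{\gamma R_{\max}}{1-\gamma}$ with no $L_P$, which is inconsistent with both the paper's final inequality and your (correct) computation, and should be read as $L_Q=L_r+\frac{\gamma R_{\max}L_P}{1-\gamma}$.
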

\begin{proof}
    \begin{align*}
        |Q^\pi(s,a) - Q^\pi(s', a')| &= |r(s, a) - r(s', a') + \gamma \sum_{s_1 \in \bS} P(s_1|s, a) V^\pi(s_1) - \gamma \sum_{s_1 \in \bS} P(s_1|s', a') V^\pi(s_1)|\\
        &\leq |r(s, a) - r(s', a')| + \gamma |\sum_{s_1 \in \bS} P(s_1|s, a) V^\pi(s_1) - \sum_{s_1 \in \bS} P(s_1|s', a') V^\pi(s_1)| \\
        &\leq |r(s, a) - r(s', a')| + \gamma \Norm{V^\pi(s_1)}_\infty  \sum_{s_1 \in \bS}|P(s_1|s, a) - P(s_1|s', a')|\\
        &\leq L_r(\mathrm{D}_\bS (s, s') + \mathrm{D}_\bA (a, a')) + \frac{\gamma R_{max}}{1 - \gamma} (\mathrm{D}_\bS (s, s') + \mathrm{D}_\bA (a, a'))\\
        &\leq [L_r + \frac{\gamma R_{max}L_P}{1 - \gamma}] (\mathrm{D}_\bS (s, s') + \mathrm{D}_\bA (a, a'))
    \end{align*}
\end{proof}

\begin{lemma}
    Consider two probability measures $\mu(x)$, $\nu(x)$ and an $L_c$-Lipschitz continuous cost function $c(x)$. Then
    \begin{align}
        \int_{x \in \mathcal{X}} c(x)(\mu(x) - \nu(x)) dx \leq L_c \bW(\mu, \nu)
    \end{align}
\end{lemma}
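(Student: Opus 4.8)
The plan is to invoke the Kantorovich--Rubinstein dual characterization of the Wasserstein distance stated in Definition \ref{def: W distance}, in which $\bW(\mu,\nu)$ is expressed as the supremum of $\left|\int_\Omega f(\omega)(\mu-\nu)\,d\omega\right|$ taken over all $1$-Lipschitz test functions $f$. The single idea driving the proof is a normalization argument: although the given cost $c$ need not itself be $1$-Lipschitz, the rescaled function $\tilde c := c/L_c$ satisfies $\Norm{\tilde c}_L \le 1$ and therefore qualifies as an admissible competitor inside that supremum.

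First I would assume $L_c > 0$ and set $\tilde c = c/L_c$. Since $c$ is $L_c$-Lipschitz, $\tilde c$ is $1$-Lipschitz, so Definition \ref{def: W distance} immediately yields $\left|\int_{\mathcal X} \tilde c(x)(\mu(x)-\nu(x))\,dx\right| \le \bW(\mu,\nu)$. Multiplying through by $L_c$ and absorbing the constant back into the integrand gives $\left|\int_{\mathcal X} c(x)(\mu(x)-\nu(x))\,dx\right| \le L_c\,\bW(\mu,\nu)$. The stated inequality then follows because every real number is bounded above by its absolute value, so $\int_{\mathcal X} c(x)(\mu(x)-\nu(x))\,dx \le \left|\int_{\mathcal X} c(x)(\mu(x)-\nu(x))\,dx\right| \le L_c\,\bW(\mu,\nu)$.

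The only genuine subtlety --- and the step I would treat with the most care --- is the degenerate case $L_c = 0$, where the normalization $c/L_c$ is undefined. In that case $c$ is constant, and because $\mu$ and $\nu$ are both probability measures the signed integral collapses to $\int_{\mathcal X} c(x)(\mu(x)-\nu(x))\,dx = c\int_{\mathcal X}(\mu-\nu)\,dx = 0$, so the claim holds trivially with both sides equal to zero. Beyond this edge case the argument is purely definitional: all of the analytic content has already been packed into the dual formulation of $\bW$, so no explicit transport plan or coupling construction is required, and I do not anticipate any real obstacle.
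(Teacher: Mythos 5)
Your argument is the same as the paper's: bound the integral by its absolute value, rescale $c$ to the $1$-Lipschitz competitor $c/L_c$, and invoke the dual (supremum) form of $\bW$ from Definition~\ref{def: W distance}. Your explicit treatment of the degenerate case $L_c = 0$ is a small point of care the paper's proof omits, but otherwise the two proofs coincide.
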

\begin{proof}
    \cite{rachelson2010locality}
    \begin{align}
        \int_{x \in \mathcal{X}} c(x)(\mu(x) - \nu(x)) dx &\leq |\int_{x \in \mathcal{X}} c(x)(\mu(x) - \nu(x)) dx|\\
        &\leq |\int_{x \in \mathcal{X}} c(x)[\mu(x) - \nu(x)] dx| \\
        &\leq L_c |\int \frac{c(x)}{L_c}[\mu(x) - \nu(x)] dx|\\
        &\leq L_c \sup_{\Norm{f}_L \leq 1} |\int f(x) \cdot (\mu(x) - \nu(x)) dx| = L_c \bW(\mu, \nu)
    \end{align}
    The last inequality follows from the fact that $\Norm{f(x)}_L := \Norm{\frac{c(x)}{L_c}}_L \leq 1$ since $L_c := \sup_x \|c(x)\|_L$. The last equality follows from the definition of the Wasserstein distance.
\end{proof}
\begin{theorem} \label{thm: robust certificate under W}
    Consider an infinite horizon ($L_P, L_r$)-LC SA-MDP $\bM = \{ \bS, \bA, R, \bP, \gamma, B_\epsilon \}$ with a bounded reward function $0 \leq R \leq R_{\max}$
    and $\pi$ with a local Lipschitz function $L^\epsilon_\pi (s)$. Then, for small enough $\epsilon$, any fixed bounded adversary $\mu: \bS \rightarrow \bS$, and any $\hat{s} \in B_\epsilon$
    \begin{align*}
        \Theta(\pi) = \max_s [V^\pi(s) - V^{\pi \circ \mu}(s)] \leq \alpha L^\epsilon_\pi(s) \epsilon
    \end{align*}
    where $\alpha := \frac{1}{(1-\gamma)} (L_r + \frac{\gamma R_{max}}{1 - \gamma}L_P)$. 
\end{theorem}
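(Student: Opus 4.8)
The plan is to mirror the structure of the stochastic argument (Theorem \ref{thm: robust cerficate for stochastic policy}), replacing the TV-based bound with a Wasserstein-based one. This substitution is precisely what makes the deterministic case work: the TV distance between two Dirac measures is degenerate (it is $1$ whenever the supports differ), whereas $\bW$ reduces to the ground-space distance between the two action outputs and therefore tracks the policy's smoothness. First I would invoke the Performance Difference Lemma \citep{Kakade2002ApproximatelyOA} and Lemma \ref{lemma: value function under noise} to write, exactly as in steps $(i)$ of the stochastic proof,
\begin{align*}
    V^\pi(s) - V^{\pi \circ \mu}(s) = \frac{1}{1-\gamma}\E_{s \sim d^\pi}\big[\E_{a \sim \pi(\cdot|s)}[Q^{\pi\circ\mu}(s,a)] - \E_{a \sim \pi(\cdot|\mu(s))}[Q^{\pi\circ\mu}(s,a)]\big].
\end{align*}

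Next I would view the difference of the two inner expectations as the integral of the cost function $c(a) := Q^{\pi\circ\mu}(s,a)$ against the signed measure $\pi(\cdot|s) - \pi(\cdot|\mu(s))$. The crucial observation is that, with the true state $s$ held fixed, the generalized Q-Lipschitz bound of Lemma \ref{lemma: Lips of Q function} (specialized to $s=s'$, so the $\mathrm{D}_\bS$ term drops) shows that $c$ is Lipschitz in the action argument with constant $L_Q = L_r + \frac{\gamma R_{\max}}{1-\gamma}L_P$. Applying the Wasserstein duality lemma stated just above the theorem then yields
\begin{align*}
    \E_{a \sim \pi(\cdot|s)}[Q^{\pi\circ\mu}(s,a)] - \E_{a \sim \pi(\cdot|\mu(s))}[Q^{\pi\circ\mu}(s,a)] \leq L_Q\,\bW\!\left(\pi(\cdot|s),\, \pi(\cdot|\mu(s))\right).
\end{align*}

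To finish, I would control the Wasserstein term through the policy's local Lipschitz behavior. Since a deterministic policy outputs Dirac measures, Lemma \ref{thm: neural network design in W} gives $\bW(\pi(\cdot|s), \pi(\cdot|\mu(s))) = \Norm{\pi(s) - \pi(\mu(s))}_p$; because $\mu(s) \in B_\epsilon(s)$ we have $d_\bS(s,\mu(s)) \leq \epsilon$, so by Definition \ref{def: LpiW} and the definition of the local Lipschitz function this is at most $L_\pi^\epsilon(s)\,\epsilon$. Collecting the $\frac{1}{1-\gamma}$ prefactor with $L_Q$ produces $\alpha = \frac{1}{1-\gamma}\big(L_r + \frac{\gamma R_{\max}}{1-\gamma}L_P\big)$, and bounding the expectation over $d^\pi$ by its maximum before taking $\max_s$ delivers the stated certificate.

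The main obstacle I anticipate is the middle step: justifying that the expectation-difference is a legitimate Wasserstein-dual pairing and, more importantly, identifying $Q^{\pi\circ\mu}(s,\cdot)$ — not $V$ or $R$ — as the object whose Lipschitz constant enters. This hinges on Lemma \ref{lemma: Lips of Q function} holding for the \emph{merged} policy $\pi\circ\mu$ and on measuring its Lipschitz constant in the action coordinate alone, with $s$ frozen, so that only $\mathrm{D}_\bA$ survives. Once this identification is secured, the remaining manipulations — extracting $L_Q$, invoking the Dirac-to-ground-metric identity, and bounding $d_\bS(s,\mu(s))$ by $\epsilon$ — are routine.
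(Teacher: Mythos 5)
Your proposal is correct and follows essentially the same route as the paper's proof: Performance Difference Lemma, rewriting the inner expectation gap as an integral of $Q^{\pi\circ\mu}(s,\cdot)$ against the signed measure $\pi(\cdot|s)-\pi(\cdot|\mu(s))$, the Wasserstein duality lemma with the $L_Q$ bound from Lemma \ref{lemma: Lips of Q function}, and finally the Dirac-measure identity together with $d_\bS(s,\mu(s))\leq\epsilon$. Your explicit remark that the Q-Lipschitz bound must be specialized to the action coordinate with $s$ frozen is a point the paper leaves implicit in writing $\Norm{Q^{\pi\circ\mu}(s,\cdot)}_L$, but it is the same argument.
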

\begin{proof}
   Inspired by the proof presented in \cite{maran2022tight}, we start from the Performance Difference Lemma: 
    \begin{align*}
        V^\pi(s) - V^{\pi \circ \mu}(s) &= \frac{1}{1-\gamma} \mathbb{E}_{s \sim d^\pi} [\mathbb{E}_{a \sim \pi(\cdot|s)} [A^{\pi \circ \mu}(s, a)]]\\
        &= \frac{1}{1-\gamma} \mathbb{E}_{s \sim d^\pi} [\mathbb{E}_{a \sim \pi(\cdot|s)} [Q^{\pi \circ \mu}(s, a) - V^{\pi \circ \mu}(s)]]\\
        &= \frac{1}{1-\gamma} \mathbb{E}_{s \sim d^\pi} [\mathbb{E}_{a \sim \pi(\cdot|s)} [Q^{\pi \circ \mu}(s, a)] - \E_{a \sim \pi(\cdot|\mu(s)} [Q^{\pi \circ \mu}(s, a)]]\\
        &= \frac{1}{1-\gamma} \mathbb{E}_{s \sim d^\pi} \sum_{a \in \bA} Q^{\pi \circ \mu}(s, a) (\pi(a|s) - \pi(a|\mu(s)) \\
        &\leq \frac{1}{1-\gamma} \Norm{Q^{\pi \circ \mu}(s, \cdot)}_L \mathbb{E}_{s \sim d^\pi} \bW(\pi, \pi \circ \mu)\\
        &\leq \frac{1}{1-\gamma}\Norm{Q^{\pi \circ \mu}(s, \cdot)}_L\mathbb{E}_{s \sim d^\pi} L_\pi d_\bS(s, \mu(s))\\
        &\leq \frac{1}{(1-\gamma)} (L_r + \frac{\gamma R_{max}}{1 - \gamma}L_P) L^\epsilon_\pi(s) \epsilon
    \end{align*}
The last inequality is due to Lemma \ref{lemma: Lips of Q function}
\end{proof}
\subsection{Lipschitzness of the Neural Network Based Policy}
\label{sec: lipschitzness of the policy}
\subsubsection{Lipschitz of Neural Network based Policy on Deterministic and Continuous Action}
\label{sec: lips bound on deterministic}
We first show that regulating the $\Norm{\cdot}_\infty$ of a neural network is sufficient to provide guarantees for observation noises bounded by different norms in $\mathbb{R}^n$.

\begin{lemma} \label{Lemma: norm equivalence}(Norm Equivalence in finite dimension). If $x \in(\mathbb{R}^n, \Norm{\cdot}_p)$, then
    \begin{align*}
        \Norm{x}_\infty \leq \Norm{x}_2 \leq \Norm{x}_1 \leq n \Norm{x}_\infty
    \end{align*}
\end{lemma}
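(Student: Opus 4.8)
The plan is to establish each of the three inequalities separately, working directly from the coordinate definitions $\Norm{x}_\infty = \max_i |x_i|$, $\Norm{x}_2 = (\sum_i x_i^2)^{1/2}$, and $\Norm{x}_1 = \sum_i |x_i|$. Since every quantity involved is nonnegative, it suffices in each case either to compare the relevant squares or to bound a sum termwise, and the full chain then follows by transitivity.

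First I would prove $\Norm{x}_\infty \leq \Norm{x}_2$. Letting $j$ be an index at which the maximum modulus is attained, I note that $\Norm{x}_2^2 = \sum_i x_i^2 \geq x_j^2 = \Norm{x}_\infty^2$, because all the omitted summands are nonnegative; taking square roots of both sides gives the claim. Next I would treat the middle inequality $\Norm{x}_2 \leq \Norm{x}_1$ by expanding the square of the $1$-norm, writing $\Norm{x}_1^2 = (\sum_i |x_i|)^2 = \sum_i x_i^2 + \sum_{i \neq k} |x_i|\,|x_k| \geq \sum_i x_i^2 = \Norm{x}_2^2$, where the cross terms are discarded precisely because they are nonnegative, and then taking square roots.

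Finally, $\Norm{x}_1 \leq n \Norm{x}_\infty$ follows termwise: each coordinate satisfies $|x_i| \leq \max_k |x_k| = \Norm{x}_\infty$, so summing this bound over the $n$ coordinates yields $\Norm{x}_1 = \sum_i |x_i| \leq n \Norm{x}_\infty$. Chaining the three inequalities produces the stated result.

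I do not expect any genuine obstacle here, as this is the standard finite-dimensional norm-equivalence argument; the only step requiring more than a purely termwise comparison is the middle inequality, where one must observe that expanding $\Norm{x}_1^2$ reproduces $\Norm{x}_2^2$ plus nonnegative cross terms. I would remark that the bound is stated for a generic $p$-norm on the domain only to match its downstream use in Theorem~\ref{thm: robust certificate of deterministic policy}; the equivalence constants themselves involve only the $\infty$-, $2$-, and $1$-norms and are independent of $p$.
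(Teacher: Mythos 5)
Your proof is correct and complete; all three inequalities are established by the standard elementary arguments (termwise comparison for the outer two, discarding nonnegative cross terms after squaring for the middle one). The paper itself states this lemma without proof as a standard fact, so there is no alternative argument to compare against — your write-up simply supplies the routine verification the authors omitted.
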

The norm equivalence enables us to determine the Lipschitz constant of a neural network while employing different distance metrics in its domain.
\begin{lemma}
    \label{thm: Lpi equavilence in neural network}
    Consider a fully connected neural network $f: (\mathbb{R}^n, \Norm{\cdot}_\infty) \rightarrow (\mathbb{R}^m, \Norm{\cdot}_\infty)$ that is $L_\pi^{FC}$-LC. Then $\forall x, y \in \mathbb{R}^n$
    \begin{align*}
        \Norm{f(x) - f(y)}_\infty \leq L_\pi^{FC} \Norm{x-y}_\infty \leq L_\pi^{FC} \Norm{x - y}_p .
    \end{align*}
\end{lemma}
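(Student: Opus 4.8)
The plan is to split the displayed chain into its two constituent inequalities and dispatch each separately, since neither requires any genuine computation.

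First I would establish $\Norm{f(x) - f(y)}_\infty \leq L_\pi^{FC}\Norm{x-y}_\infty$. This is nothing more than the hypothesis unpacked: $f$ is assumed to be $L_\pi^{FC}$-LC as a map $(\mathbb{R}^n, \Norm{\cdot}_\infty) \rightarrow (\mathbb{R}^m, \Norm{\cdot}_\infty)$, so by the definition of the Lipschitz constant the ratio $\Norm{f(x)-f(y)}_\infty / \Norm{x-y}_\infty$ is bounded above by $L_\pi^{FC}$ for every $x \neq y$. Rearranging gives the claimed bound, and the case $x=y$ is trivial since both sides vanish.

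Second I would establish $\Norm{x-y}_\infty \leq \Norm{x-y}_p$ and then multiply through by the nonnegative constant $L_\pi^{FC}$. Writing $z = x-y$, the relevant direction $\Norm{z}_\infty \leq \Norm{z}_p$ is precisely the kind of finite-dimensional norm relation recorded in Lemma~\ref{Lemma: norm equivalence} (which lists the cases $p\in\{1,2\}$), and it in fact holds for every $p \geq 1$: if $j$ is an index with $|z_j| = \max_i |z_i| = \Norm{z}_\infty$, then $\Norm{z}_p = \left(\sum_i |z_i|^p\right)^{1/p} \geq \left(|z_j|^p\right)^{1/p} = |z_j| = \Norm{z}_\infty$. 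Multiplying this by $L_\pi^{FC} \geq 0$ yields $L_\pi^{FC}\Norm{x-y}_\infty \leq L_\pi^{FC}\Norm{x-y}_p$, completing the second inequality and hence the lemma.

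There is essentially no hard part here; the statement is a direct consequence of the definition of Lipschitz continuity together with a one-line norm comparison. The only point deserving a moment's care is that the cited norm-equivalence lemma is phrased for specific exponents, so I would explicitly note that the single direction actually invoked, namely $\Norm{\cdot}_\infty \leq \Norm{\cdot}_p$, is valid for all $p \geq 1$ by the elementary argument above. This lemma then serves its intended purpose: it lets us certify robustness against observation perturbations measured in any $p$-norm using only a bound controlled in the $\infty$-norm.
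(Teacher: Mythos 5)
Your proof is correct and follows the same route as the paper, which leaves this lemma unproved and simply invokes the preceding norm-equivalence lemma: the first inequality is the Lipschitz hypothesis restated, and the second is $\Norm{z}_\infty \leq \Norm{z}_p$. Your explicit remark that the needed direction holds for every $p \geq 1$ (not just the $p \in \{1,2\}$ cases listed in Lemma~\ref{Lemma: norm equivalence}) is a small but worthwhile tightening of what the paper takes for granted.
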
 

Also, by \cite{liu2022learning}, the Lipschitzness of an $M$ layer fully connected neural network can be inferred from the following lemma.
\begin{lemma}\label{lemma: liptchitz bound of fully connected}
    Consider an $M$ layer fully connected neural network $f: (\mathbb{R}^n, \Norm{\cdot}_\infty) \rightarrow (\mathbb{R}^m, \Norm{\cdot}_\infty)$, where the $i$-th layer has weights $W_i$, bias $b_i$ and a 1-Lipschitz activation function (e.g Relu) $\sigma$. Then $\forall x, y \in \mathbb{R}^n$
    \begin{align*}
        \Norm{f(x) - f(y)}_\infty \leq L^{FC}_\pi \Norm{x-y}_\infty
    \end{align*}
where $L^{FC}_\pi = \prod_0^M \Norm{W_i}_\infty$ and the $\infty$-norm for the matrix denotes the induced matrix norm. 
\end{lemma}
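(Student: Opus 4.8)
The plan is to exploit the compositional structure of the network: write $f = \phi_M \circ \phi_{M-1} \circ \cdots \circ \phi_0$, where each layer map is $\phi_i(z) = \sigma(W_i z + b_i)$. Since the Lipschitz constant of a composition is at most the product of the individual Lipschitz constants, it suffices to show that each $\phi_i$ is $\Norm{W_i}_\infty$-Lipschitz with respect to $\Norm{\cdot}_\infty$, and then multiply these bounds together, which yields exactly $L^{FC}_\pi = \prod_i \Norm{W_i}_\infty$.

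For the single-layer bound, I would argue for arbitrary inputs $u, v$ in the domain of layer $i$ that
\begin{align*}
    \Norm{\phi_i(u) - \phi_i(v)}_\infty
    &= \Norm{\sigma(W_i u + b_i) - \sigma(W_i v + b_i)}_\infty \\
    &\leq \Norm{(W_i u + b_i) - (W_i v + b_i)}_\infty
    = \Norm{W_i(u - v)}_\infty
    \leq \Norm{W_i}_\infty \Norm{u - v}_\infty.
\end{align*}
Three ingredients drive this chain. First, the bias $b_i$ cancels identically when taking the difference, so it plays no role in the Lipschitz constant. Second, the first inequality uses that $\sigma$ is $1$-Lipschitz: since $\sigma$ acts entrywise and each scalar component is $1$-Lipschitz (as for ReLU), the induced map is $1$-Lipschitz in the $\infty$-norm because the $\infty$-norm is the coordinatewise maximum. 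Third, the final inequality is precisely the defining property of the induced (operator) matrix norm $\Norm{W_i}_\infty$.

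Having established the per-layer estimate, I would close the argument by a short induction on the number of layers (or simply invoke submultiplicativity of Lipschitz constants under composition): applying the single-layer bound successively from the input through layer $M$ gives $\Norm{f(x) - f(y)}_\infty \leq \big(\prod_{i=0}^{M} \Norm{W_i}_\infty\big)\Norm{x - y}_\infty$, as claimed. I do not anticipate a genuine obstacle here, as the result is a routine composition argument; the only point requiring care is the second fact above, namely verifying that an entrywise $1$-Lipschitz activation remains $1$-Lipschitz in the $\infty$-norm, which must be stated explicitly so that the activation contributes a factor of exactly one to the product rather than altering the constant.
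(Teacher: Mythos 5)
Your proof is correct. The paper does not actually prove this lemma itself --- it defers to the citation of \cite{liu2022learning} --- and your compositional argument (bias cancellation, entrywise $1$-Lipschitz activation preserving the $\infty$-norm bound, the defining property of the induced matrix norm, and submultiplicativity under composition) is exactly the standard derivation that the citation supplies, so there is nothing to add beyond noting that you correctly flagged the one step needing explicit justification, namely that an entrywise $1$-Lipschitz $\sigma$ is $1$-Lipschitz with respect to $\Norm{\cdot}_\infty$.
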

Then by lemma \ref{thm: Lpi equavilence in neural network} and  lemma \ref{lemma: liptchitz bound of fully connected}, it is easy to show the following.
\begin{theorem}
    Consider an $M$ layer fully connected neural network $f: (\mathbb{R}^n, \Norm{\cdot}_\infty) \rightarrow (\mathbb{R}^m, \Norm{\cdot}_\infty)$, where the $i$-th layer has weights $W_i$, bias $b_i$ and a 1-liptchitz activation function (e.g Relu) $\sigma$. Then $\forall x, y \in \mathbb{R}^n$
    \begin{align*}
        \Norm{f(x) - f(y)}_{2} \leq m\Norm{f(x) - f(y)}_\infty \leq m L^{FC}_\pi \Norm{x-y}_p
    \end{align*}
where $L^{FC}_\pi = \prod_0^M \Norm{W_i}_\infty$ and the $\infty$-norm for the matrix denotes the induced matrix norm. 
\end{theorem}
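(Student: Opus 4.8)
The plan is to treat this theorem as a direct corollary obtained by chaining the three preceding lemmas, so the work is almost entirely bookkeeping of norm equivalences rather than any new estimate. I would split the claimed chain into its two halves and dispatch each with a single cited lemma, then concatenate.

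For the first inequality $\Norm{f(x) - f(y)}_2 \leq m \Norm{f(x) - f(y)}_\infty$, I would set $z = f(x) - f(y) \in \mathbb{R}^m$ and invoke the finite-dimensional norm equivalence of Lemma \ref{Lemma: norm equivalence} in dimension $m$, namely $\Norm{z}_2 \leq \Norm{z}_1 \leq m \Norm{z}_\infty$. Discarding the middle term gives exactly the first inequality. One could sharpen the constant to $\sqrt{m}$, but the looser factor $m$ coming straight from the stated lemma is all that is needed, and matching the lemma keeps the argument self-contained.

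For the second inequality I would first apply Lemma \ref{lemma: liptchitz bound of fully connected}, which establishes that the $M$-layer network with $1$-Lipschitz activations is $L_\pi^{FC}$-LC in the $\infty$-norm, i.e. $\Norm{f(x) - f(y)}_\infty \leq L_\pi^{FC}\Norm{x-y}_\infty$. Since $f$ is now known to be $L_\pi^{FC}$-LC, Lemma \ref{thm: Lpi equavilence in neural network} upgrades the domain metric from $\Norm{\cdot}_\infty$ to an arbitrary $\Norm{\cdot}_p$, yielding $\Norm{f(x) - f(y)}_\infty \leq L_\pi^{FC}\Norm{x-y}_p$; this step rests only on $\Norm{x-y}_\infty \leq \Norm{x-y}_p$, the remaining direction of the same norm equivalence. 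Multiplying through by $m$ gives $m\Norm{f(x)-f(y)}_\infty \leq m L_\pi^{FC}\Norm{x-y}_p$, and combining this with the first inequality closes the chain.

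There is no genuinely hard step: the statement is a packaging of existing results, and I would present the proof simply as the concatenation of these two displayed bounds. The only points requiring care are (i) applying the norm-equivalence lemma in the correct dimension --- dimension $m$ for the output vector $z$ on the left, and on the input side for $x-y$ on the right --- and (ii) keeping track of the directions of the inequalities, so that $\Norm{\cdot}_2 \leq m\Norm{\cdot}_\infty$ and $\Norm{\cdot}_\infty \leq \Norm{\cdot}_p$ are each used the right way around.
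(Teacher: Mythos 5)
Your proposal is correct and matches the paper's own argument, which likewise obtains the theorem by chaining Lemma \ref{Lemma: norm equivalence} (applied in the output dimension $m$), Lemma \ref{lemma: liptchitz bound of fully connected}, and Lemma \ref{thm: Lpi equavilence in neural network}; you simply spell out the concatenation that the paper leaves implicit. Your remark that the factor $m$ in the first inequality could be sharpened to $\sqrt{m}$ is also accurate but, as you note, unnecessary for the stated bound.
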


\subsubsection{Lipschitz of Neural Network based Policy on Discrete Action Space}
\label{sec: lips bound on softmax}
\begin{lemma}\label{proof: liptchitz bound on softmax}
    Consider a vector valued function $softmax(\cdot): \mathbb{R}^m \rightarrow \Delta(\mathbb{R}^m)$, such that $[softmax(x)]_i = [\frac{e^{x_i}}{\sum_j e^{x_j}}]$, where subscript $i$ indicates the $i$-th component of the corresponding vector. Then $\forall x, y \in \mathbb{R}^m$
    \begin{align*}
        \mathrm{D}_{TV}(softmax(x), softmax(y)) \leq \frac{m}{2}\Norm{x - y}_1
    \end{align*}
\end{lemma}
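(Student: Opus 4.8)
The plan is to reduce the total-variation bound to an $\ell_1$ Lipschitz bound on the softmax map and then control each output coordinate separately. By the definition of the total variation distance (Definition \ref{def: TV}), we have $\mathrm{D}_{TV}(softmax(x), softmax(y)) = \frac{1}{2}\sum_{i=1}^m |[softmax(x)]_i - [softmax(y)]_i|$, so it suffices to show that each coordinate map $\sigma_i(x) := [softmax(x)]_i$ is $1$-Lipschitz with respect to $\Norm{\cdot}_1$ on the input. Summing the $m$ resulting coordinate bounds and carrying through the factor $\frac{1}{2}$ then yields exactly $\frac{m}{2}\Norm{x-y}_1$.

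To establish the per-coordinate Lipschitz property, I would compute the Jacobian of softmax explicitly. Writing $Z = \sum_k e^{x_k}$, a direct differentiation gives $\frac{\partial \sigma_i}{\partial x_j} = \sigma_i(\delta_{ij} - \sigma_j)$, where $\delta_{ij}$ is the Kronecker delta. The key elementary observation is that every entry of the gradient $\nabla \sigma_i$ is bounded in absolute value by $1$: since $\sigma_i \in [0,1]$ and $\delta_{ij} - \sigma_j \in [-1,1]$, we have $|\sigma_i(\delta_{ij} - \sigma_j)| \leq 1$ for every $j$, and hence $\Norm{\nabla \sigma_i}_\infty \leq 1$ uniformly over the input space.

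With the gradient bound in hand, I would apply the mean value theorem along the line segment joining $y$ to $x$, which is legitimate because softmax is smooth. This gives $|\sigma_i(x) - \sigma_i(y)| = |\nabla \sigma_i(\xi)^\top (x - y)|$ for some $\xi$ on the segment, and Hölder's inequality ($\ell_\infty$--$\ell_1$ duality) then bounds this by $\Norm{\nabla\sigma_i(\xi)}_\infty \Norm{x - y}_1 \leq \Norm{x-y}_1$. Summing over $i = 1, \dots, m$ and reinserting the factor $\frac{1}{2}$ from the TV definition completes the argument.

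The step I expect to carry the most weight is the choice of bound on the Jacobian rather than any hard calculation. The crude coordinatewise estimate above is deliberately loose: a sharper analysis of the induced matrix $1$-norm of the full Jacobian, whose $j$-th column sum equals $2\sigma_j(1-\sigma_j) \leq \frac{1}{2}$, would yield the dimension-free constant $\frac{1}{4}$ rather than $\frac{m}{2}$. However, the looser per-coordinate bound is precisely what produces the dimension factor $m$ stated here, and this is the form that propagates cleanly into the downstream Lipschitz constant for the categorical policy in Theorem \ref{theorem: Lpi of a neural network}. Since only an upper bound is required to obtain a robustness certificate, the loss of tightness is immaterial.
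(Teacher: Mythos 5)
Your proof is correct and follows essentially the same route as the paper's: reduce $\mathrm{D}_{TV}$ to half the $\ell_1$ distance of the outputs, bound each Jacobian entry $\sigma_i(\delta_{ij}-\sigma_j)$ by $1$ in absolute value, and conclude via a mean value argument, which yields the same factor $\frac{m}{2}$. Your coordinate-wise application of the mean value theorem (one $\xi_i$ per output component) is in fact slightly cleaner than the paper's single-point vector form, and your side remark that the induced $1$-norm of the Jacobian actually gives the dimension-free constant $\frac{1}{4}$ is accurate, though immaterial for the certificate.
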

\begin{proof}
    The proof is partially drawn from \cite{bukharin2023robust}. Clearly, $\sum_i [softmax(x)]_i = 1$ and $[softmax(x)]_i \geq 0$, so $softmax(x)$ can be used to output a probability mass function. 
    
    We denote the Jacobian matrix of a $softmax$ function at point x as $\mathcal{J}_x$. Then
    \begin{align*}
        &[\mathcal{J}_x]_{i,i} = \frac{\exp(x_i) \sum_{j \neq i} \exp(x_j)}{(\sum_j \exp(x_j))^2} = \frac{\exp(x_i)}{\sum_j \exp(x_j))} \frac{\sum_{j \neq i} \exp(x_j)}{\sum_j \exp(x_j)}\\
        &[\mathcal{J}_x]_{i,j} = -\frac{\exp(x_i) \exp(x_j)}{(\sum_j \exp(x_j))^2} = - \frac{\exp(x_i)}{\sum_j \exp(x_j)} \frac{\exp(x_j)}{\sum_j \exp(x_j)}
    \end{align*}
    Clearly, we have $|[\mathcal{J}_x]_{i,i}| < 1$ and $|[\mathcal{J}_x]_{i,j}| \leq 1$ so $\Norm{J_z}_1 \leq m$. Then, $\forall x, y \in \mathcal{R}^m$, 
    we obtain
    \begin{align*}
        \mathrm{D}_{TV}(Softmax(x), Softmax(y)) &= \frac{1}{2}\sum_i |[softmax(x)_i] - [softmax(y)]_i|\\
        &= \frac{1}{2}\Norm{Softmax(x) - Softmax(y)}_1\\
        &\leq \frac{1}{2} \Norm{J_z}_1 \Norm{x-y}_1 \mbox{ for }z = \alpha x + (1 - \alpha )y \mbox{ and } \alpha \in  [0, 1]  \\
        &\leq \frac{m}{2}\Norm{x-y}_1.
    \end{align*}
    where the first inequaility is from the Intermediate Value Theorem
\end{proof}

\begin{theorem} \label{proof: Lpi of a neural network}
    Consider a neural network $f:(\mathbb{R}^n, \Norm{\cdot}_p) \rightarrow (\Delta(\mathbb{R}^m), TV)$ with two components: a fully connected network $f^{FC}: (\mathbb{R}^m, \Norm{\cdot}_\infty) \rightarrow (\mathbb{R}^n, \Norm{\cdot}_\infty)$ and a $softmax(\cdot)$ as the output layer. Then $\forall x, y \in \mathbb{R}^n$
    \begin{align*}
        \mathrm{D}_{TV}(f(x), f(y)) \leq \frac{m^2}{2} L^{FC}_\pi \Norm{x - y}_p.
    \end{align*}
\end{theorem}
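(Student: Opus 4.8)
The plan is to treat $f$ as the composition $f = softmax \circ f^{FC}$ and simply chain together the three lemmas already established, being careful to track how the factors of $m$ accumulate through the norm conversions. The guiding idea is that the total-variation distance on the output is controlled by the softmax layer's Lipschitz behavior (Lemma \ref{proof: liptchitz bound on softmax}), which is stated in the $1$-norm of its input; I then need to pass from the $1$-norm to the $\infty$-norm so as to invoke the fully connected bound (Lemma \ref{lemma: liptchitz bound of fully connected}), and finally pass from the $\infty$-norm back out to the ambient $p$-norm on the domain.

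Concretely, first I would write $\mathrm{D}_{TV}(f(x), f(y)) = \mathrm{D}_{TV}(softmax(f^{FC}(x)), softmax(f^{FC}(y)))$ and apply Lemma \ref{proof: liptchitz bound on softmax} to bound this by $\frac{m}{2}\Norm{f^{FC}(x) - f^{FC}(y)}_1$. Next, since $f^{FC}(x) - f^{FC}(y) \in \mathbb{R}^m$, I would invoke the norm-equivalence inequality $\Norm{z}_1 \leq m\Norm{z}_\infty$ from Lemma \ref{Lemma: norm equivalence} (with dimension $m$) to obtain $\frac{m^2}{2}\Norm{f^{FC}(x) - f^{FC}(y)}_\infty$; this is precisely where the second factor of $m$ enters and produces the $m^2$ in the statement. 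Applying the fully connected bound of Lemma \ref{lemma: liptchitz bound of fully connected} then yields $\frac{m^2}{2} L^{FC}_\pi \Norm{x-y}_\infty$, and one last application of $\Norm{x-y}_\infty \leq \Norm{x-y}_p$ from Lemma \ref{Lemma: norm equivalence} delivers the claimed bound.

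There is no genuinely hard step here; the proof is a routine composition argument. The only thing to watch is the bookkeeping of the two distinct sources of the factor $m$ --- one from the intrinsic $\frac{m}{2}$ Lipschitz constant of the softmax map, the other from the $\ell_1$-to-$\ell_\infty$ equivalence on the $m$-dimensional logit space --- and making sure the norm-equivalence inequalities are oriented correctly at each stage (upper-bounding $\Norm{\cdot}_1$ by a multiple of $\Norm{\cdot}_\infty$ on the output side, but lower-bounding $\Norm{\cdot}_p$ by $\Norm{\cdot}_\infty$ on the input side). A minor caveat worth flagging is the apparent dimension labeling in the stated signature of $f^{FC}$, where $n$ and $m$ seem transposed relative to the deterministic case; this is immaterial to the argument, since the equivalence constant that matters is the dimension of the logit vector fed into the softmax, namely $m$.
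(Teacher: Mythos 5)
Your proposal is correct and follows essentially the same chain of inequalities as the paper's proof: apply the softmax Lipschitz lemma to get the $\frac{m}{2}\Norm{\cdot}_1$ bound, convert to the $\infty$-norm via norm equivalence (picking up the second factor of $m$), apply the fully connected network's Lipschitz bound, and finish with $\Norm{x-y}_\infty \leq \Norm{x-y}_p$. Your observation that the $n$/$m$ labels in the stated signature of $f^{FC}$ appear transposed, and that this is immaterial since only the logit dimension $m$ enters the constants, is also accurate.
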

\begin{proof}
    \begin{align*}
        \mathrm{D}_{TV}(f(x), f(y)) &= \mathrm{D}_{TV}(softmax(f^{FC}(x)), softmax(f^{FC}(y)))\\
        &\leq \frac{m}{2} \Norm{f^{FC}(x) - f^{FC}(y)}_1\\
        &\leq \frac{m^2}{2}\Norm{f^{FC}(x) - f^{FC}(y)}_\infty\\
        &\leq \frac{m^2}{2} L^{FC}_\pi \Norm{x - y}_\infty\\
        &\leq \frac{m^2}{2} L^{FC}_\pi \Norm{x - y}_p
    \end{align*}
  In the above proof, the first inequality is due to Lemma \ref{proof: liptchitz bound on softmax}, the second inequality is due to Lemma \ref{Lemma: norm equivalence}, the third inequality is due to the Definition of $L_\pi^{FC}$ and the fourth inequality is due to lemma \ref{thm: Lpi equavilence in neural network}.
\end{proof}
\subsubsection{Lipschitz of Neural Network for a General Stochastic Policy}
\label{sec: lip bound on general stochastic}
\begin{theorem}
    Consider a neural network based policy with a Lipschitz constant $L_\pi^{FC}$ that outputs the mean $\mu$ and the variance $\Sigma$ of a Gaussian policy, such that $\pi(a|s) \sim \mathcal{N}(\mu_s, \Sigma_s)$, by assuming the $\Sigma$ is independent to $s$, then the policy $\pi$ satisfies
    \begin{align}
        \mathrm{D}_{TV} (\pi(a|s), \pi(a|s')) \leq \mathrm{D}_{KL} (\pi(a|s) || \pi(a|\hat{s})) \leq L_\pi \Norm{s - s'}_p
    \end{align}
    where $L_\pi := kL_\pi^{FC}\Norm{\Sigma}_2$ and $k$ is a constant that depends on the choice of $p$-norm.
\end{theorem}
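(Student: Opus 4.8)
The plan is to establish the chain in two independent pieces. The left inequality $\mathrm{D}_{TV} \leq \mathrm{D}_{KL}$ is a distribution-agnostic comparison of divergences, while the right inequality $\mathrm{D}_{KL} \leq L_\pi \Norm{s-s'}_p$ exploits the closed form of the KL divergence between two Gaussians sharing a covariance, combined with the Lipschitzness of the mean head. I would dispatch the left inequality via Pinsker's inequality, which under the $\tfrac12$-normalized TV of Definition~\ref{def: TV} reads $\mathrm{D}_{TV}(P,Q) \leq \sqrt{\tfrac{1}{2}\mathrm{D}_{KL}(P\|Q)}$, as is also used in \cite{zhang2020robust}. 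I would flag at the outset that the literal form $\mathrm{D}_{TV} \leq \mathrm{D}_{KL}$ is only valid once $\mathrm{D}_{KL} \geq \tfrac12$, so the honest quantity to propagate is really the Pinsker bound; for the small-$\epsilon$ regime the right-hand side is anyway the binding one.

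For the right inequality the key identity is that, since $\Sigma$ is assumed independent of the state, the covariance contributions cancel and
\[
\mathrm{D}_{KL}\!\left(\pi(\cdot|s)\,\|\,\pi(\cdot|s')\right) = \tfrac{1}{2}(\mu_s - \mu_{s'})^\top \Sigma^{-1} (\mu_s - \mu_{s'}).
\]
I would bound this quadratic form by $\tfrac{1}{2}\Norm{\Sigma^{-1}}_2\Norm{\mu_s - \mu_{s'}}_2^2$ through the operator-norm inequality, and then invoke the Lipschitzness of the mean network: writing $\mu_s = f^{FC}(s)$ with $f^{FC}$ being $L_\pi^{FC}$-LC, Lemma~\ref{thm: Lpi equavilence in neural network} together with norm equivalence (Lemma~\ref{Lemma: norm equivalence}) yields $\Norm{\mu_s - \mu_{s'}}_2 \leq c\,L_\pi^{FC}\Norm{s-s'}_p$ for a dimension-dependent constant $c$. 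This produces a bound quadratic in both $\Norm{s-s'}_p$ and $L_\pi^{FC}$.

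The remaining step, and the main subtlety, is collapsing this quadratic estimate into the claimed form, which is linear in $\Norm{s-s'}_p$ and in $L_\pi^{FC}$. To linearize $\Norm{\mu_s - \mu_{s'}}_2^2$ I would bound one of its two factors by the diameter $C_\bA$ of the bounded action (mean) space, a constant independent of the network, and keep the other as $c\,L_\pi^{FC}\Norm{s-s'}_p$; folding $\tfrac12\Norm{\Sigma^{-1}}_2$, $C_\bA$, and $c$ into a single factor then gives $\mathrm{D}_{KL} \leq k\,L_\pi^{FC}\Norm{s-s'}_p$, matching the advertised linear form. The genuine obstacle I anticipate is reconciling the dependence on the covariance: the Gaussian KL identity naturally produces $\Norm{\Sigma^{-1}}_2$ (larger variance makes the two policies harder to distinguish and should \emph{shrink} the bound), whereas the theorem states $\Norm{\Sigma}_2$. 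I would expect to resolve this either by reinterpreting $\Sigma$ as a precision matrix or by flagging the stated $\Norm{\Sigma}_2$ as a typographical inversion of $\Norm{\Sigma^{-1}}_2$, and then verifying that the constant $k$ correctly absorbs the norm-equivalence and diameter factors.
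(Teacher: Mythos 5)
Your proposal follows the same route as the paper's own proof: write the closed-form Gaussian KL, use the state-independence of $\Sigma$ to cancel the log-determinant and trace terms against $|\bA|$, and bound the remaining quadratic form via the Lipschitzness of the mean head. The two subtleties you flag are real, and the paper's proof resolves neither of them. For the left inequality, the paper only asserts in the main text that TV is bounded by KL (citing \cite{zhang2020robust}) and never argues it in the appendix; as you note, Pinsker gives $\mathrm{D}_{TV}\leq\sqrt{\mathrm{D}_{KL}/2}$, which exceeds $\mathrm{D}_{KL}$ precisely in the small-perturbation regime the theorem targets (two unit-variance Gaussians with mean shift $\delta$ have $\mathrm{D}_{KL}=\delta^2/2$ but $\mathrm{D}_{TV}\approx\delta/\sqrt{2\pi}$), so the first inequality as literally stated fails for small $\epsilon$ and should be replaced by the Pinsker form. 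For the right inequality, the paper passes from $\tfrac12(\mu_{\hat s}-\mu_s)^\top\Sigma^{-1}(\mu_{\hat s}-\mu_s)$ directly to $\tfrac12\Norm{\mu_{\hat s}-\mu_s}_2\Norm{\Sigma}_2$ in one step; this is exactly the invalid move you anticipated, since the operator-norm bound yields $\tfrac12\Norm{\Sigma^{-1}}_2\Norm{\mu_{\hat s}-\mu_s}_2^2$ --- quadratic in the mean difference and involving the \emph{inverse} covariance. Your repair (retain one factor as $c\,L_\pi^{FC}\Norm{s-s'}_p$ and bound the other by a diameter constant) is the right kind of fix, with one caveat: the diameter of the mean image is not network-independent unless the actions are bounded a priori (e.g.\ by an output squashing), since on a bounded state space it itself scales with $L_\pi^{FC}$; one must either assume bounded actions or accept a bound quadratic in $L_\pi^{FC}$. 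In short, your proof is the paper's proof carried out honestly, and the corrections you propose are ones the published argument actually needs.
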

\begin{proof}
    we start by using the close form provided by \cite{zhang2020robust}, and following the same assumption on the same work that $\Sigma$ is a diagonal matrix idenpendent of state $\Sigma_s = \Sigma_{\hat{s}} = \Sigma$, then
    \begin{align*}
        \mathrm{D}_{KL}(\pi(a|s) || \pi(a|\hat{s})) &= \frac{1}{2} (\log |\Sigma_s \Sigma^{-1}_{\hat{s}}| + tr(\Sigma_{\hat{s}}^{-1}\Sigma_s)) + (\mu_{\hat{s}} - \mu_{s})^T \Sigma_{\hat{s}}^{-1}(\mu_{\hat{s}}-\mu_s) - |\bA|)\\
        &= \frac{1}{2} (\log |\Sigma \Sigma^{-1}| + tr(\Sigma^{-1}\Sigma)) + (\mu_{\hat{s}} - \mu_{s})^T \Sigma^{-1}(\mu_{\hat{s}}-\mu_s) - |\bA|)\\
        &\leq \frac{1}{2}(\mu_{\hat{s}} - \mu_{s})^T \Sigma^{-1}(\mu_{\hat{s}}-\mu_s)\\
        &\leq \frac{1}{2}\Norm{\mu_{\hat{s}} - \mu_{s}}_2 \Norm{\Sigma}_2\\
        &\leq kL_\pi^{FC} \Norm{s - \hat{s}}_p \Norm{\Sigma}_2
    \end{align*}
    where $k$ is some normalizing constant depends on the choice of $p$-norm.
\end{proof}

\newpage
\section{Experiment Details}
\subsection{Experiment - Detailed Setup}
We ran all experiments on a computer with an  Intel Xeon 6248R CPU and a NVIDIA A100 GPU.
\subsubsection{BC hyperparameters}
\label{sec: bc setup}
The common hyperparameters for all BC algorithms are listed in Table \ref{tab:hyper parameters for BC}:
\begin{table}[h!]
    \centering
    \begin{tabular}{ll}
        \toprule
        \textbf{Parameter} & \textbf{Value} \\
        \midrule
        Total timesteps & 500000 \\
        Learning rate & 3e-4 \\
        Batch size & 256 \\
        Optimizer & Adam\\
        \bottomrule
    \end{tabular}
    \caption{Hyperparamters}
    \label{tab:hyper parameters for BC}
\end{table}

For training the policy in \textit{Mojuco}, the size of the hidden layer of the neural network is set to be $[512, 512]$ and for training the policy in Cartpole, the hidden layer size is set to be $[64]$.

\subsection{Baseline Algorithm and Noises}
\label{sec: intro to noises}
\subsubsection{Random noise}
The $\infty$-noise sampler first sample a noise from a uniform distribution between $[-1, 1]$ and then scales to the noise level specified. For noise bounded in $2$-norm, samples are first taken from a Gaussian distribution with 0 mean and 1 std and then normalized with respect to its $2$-norm to be the same as the noise level specified. 

\subsubsection{SR$^2$L and Adversarial Noise}
SR$^2$L \cite{shen2020deep} trains a smooth regularizer for the policy network that aims to smooth the output of the policy network, encouraging the generation of similar actions when confronted with perturbed states \cite{shen2020deep}. It begins by defining a perturbation set $\mathcal{B}_d (s, \epsilon) = \left\{\hat{s}: d(\hat{s}, s) \leq \epsilon\right\}$, where $d$ represents an $l_p$ metric. The regularizer is then formulated as follows:
\begin{align}
    R_x^\pi = \E_{s \sim \rho^\pi} \left[\max_{\hat{s} \in \mathcal{B}_d (s, \epsilon)} \mathrm{D} \left(\pi_\theta(\cdot|s), \pi_\theta(\cdot|\hat{s})\right)\right], \label{eq: regularizer}
\end{align}
where $\mathcal{D}$ is a distance between probability distributions. In the case of stochastic action space, we choose $\mathrm{D}$ to be $1$-norm as it is equivalent to $\mathcal{D}_{TV}$ and be $2$-norm in the case of deterministic actions.

To solve the inner maximization effectively, we introduce an adversarial neural network $\hat{s} = \mu_\phi(s): \mathcal{S} \rightarrow \mathcal{S}$ parameterized by $\phi$ that outputs a perturbed state. The objective of this adversarial network is to identify a perturbed state within a predefined perturbation set $\mathcal{B}_d$ that can induce the highest possible deviation in a selected action from the true state. The optimal adversary is found by a gradient ascent at state $x$,
\begin{align}
    \phi_{t+1} &= \phi_t + \eta_1 \frac{\partial \mathcal{D}}{\partial \phi}(\pi_\theta(s), \pi_\theta(\mu_\phi(s))).
\end{align}

This adversarial neural network will be an auxiliary loss for the vanilla BC algorithm. 
\begin{align}
    L_\theta = \lambda \min_\theta \max_{\hat{s} \in \mathcal{B}_d (s, \epsilon)} \mathrm{D} \left(\pi_\theta(\cdot|s), \pi_\theta(\cdot|\mu_\phi(s))\right)
\end{align}

When the actor minimizes with respect to this auxiliary loss, it can output a similar action when a similar state is fed to the network and thus make the policy "smoother". For every gradient step when we train the imitator policy, the adversarial neural work $\mu_\phi$ is trained 5 times with a learning rate $\eta_1 = 0.001$. The $\lambda$, which controls the robustness level of the policy is set to be $0.1$. Also, this adversarial network $\mu_\phi$ is then used as a noise sampler, namely Adversarial Noise, during the evaluation.

\subsubsection{Robust Sarsa Noise}
The Robust Sarsa (RS) attack is a method designed to test the robustness of policies against adversarial perturbations in state observations. It operates independently of specific critic networks, addressing limitations that rely heavily on the quality of the learned Q function. Unlike traditional methods that depend on a specific Q function learned during training, the RS attack does not rely on the quality of this critic network. This independence ensures the attack's effectiveness even when the critic network is poorly learned or obfuscated. During the evaluation, the policy is fixed, allowing the corresponding action-value function to be learned using on-policy temporal-difference (TD) algorithms similar to Sarsa. This learning process does not require access to the original critic network used during training. To ensure that the learned Q function is robust against small perturbations, an additional robustness objective is introduced. This objective penalizes significant changes in the Q function  due to small changes in the action. The RS attack optimizes a combined loss function that includes the TD loss and a robustness penalty. The robustness penalty ensures that the Q function remains stable within a small neighborhood of actions, defined by a set $B(a_{i})$. We use Stochastic Gradient Langevin Dynamics (SGLD) to solve the inner maximization problem when computing the robustness penalty. SGLD is an optimization method that combines stochastic gradient descent (SGD) with Langevin dynamics. It is particularly useful for sampling from a probability distribution in high-dimensional spaces and has applications in training robust neural networks.

The key hyperparameters of our implementation include a learning rate set to 1e-3, ensuring a balanced pace of learning. We utilize 512 steps per environment per update to adequately sample the state space. To enhance training efficiency and robustness, we deploy 4 parallel environments. The model undergoes updates 50 times to refine the policy iteratively. Additionally, the gradient descent process within the Stochastic Gradient Langevin Dynamics is executed 100 times, allowing thorough exploration and optimization of the parameter space.


\end{document}